\documentclass{article} 

\usepackage[preprint]{icml2026}

\usepackage[american]{babel}

\usepackage{natbib}
\usepackage{mathtools}
\usepackage{amsfonts}
\usepackage{booktabs}
\usepackage{graphicx}
\usepackage{tikz}

\usepackage{amsthm}
\newtheorem{theorem}{Theorem}[section]
\newtheorem{lemma}[theorem]{Lemma}
\newtheorem{definition}[theorem]{Definition}
\newtheorem{proposition}[theorem]{Proposition}
\newtheorem{corollary}[theorem]{Corollary}
\newtheorem{example}[theorem]{Example}

\newcommand{\R}{\mathbb{R}}
\newcommand{\E}{\mathbb{E}}

\newcommand{\bX}{\boldsymbol{X}}
\newcommand{\onehot}{\mathrm{onehot}}
\newcommand{\titredupaper}{Decomposing Probabilistic Scores: Reliability, Information Loss and Uncertainty}

\icmltitlerunning{Decomposing Probabilistic Scores}

\begin{document}

\twocolumn[
  \icmltitle{Decomposing Probabilistic Scores\\ Reliability, Information Loss and Uncertainty}

\begin{icmlauthorlist}
    \icmlauthor{Arthur Charpentier}{xxx,zzz}
    \icmlauthor{Agathe Fernandes~Machado}{xxx}
  \end{icmlauthorlist}

  \icmlaffiliation{xxx}{UQAM, Montréal, Canada}
  \icmlaffiliation{zzz}{Kyoto University, Japan}
  \icmlcorrespondingauthor{Arthur Charpentier}{charpentier.arthur@uqam.ca}
 \vskip 0.3in
]
\printAffiliationsAndNotice{}

\begin{abstract}
Calibration is a conditional property that depends on the information retained by a predictor. We develop decomposition identities for arbitrary proper losses that make this dependence explicit. At any information level $\mathcal A$, the expected loss of an $\mathcal A$-measurable predictor splits into a proper-regret (reliability) term and a conditional entropy (residual uncertainty) term.
For nested levels $\mathcal A\subseteq\mathcal B$, a chain decomposition quantifies the information gain from $\mathcal A$ to $\mathcal B$. 
Applied to classification with features $\bX$ and score $S=s(\bX)$, this yields a three-term identity: miscalibration, a {\em grouping} term measuring information loss from $\bX$ to $S$, and irreducible uncertainty at the feature level. We leverage the framework to analyze post-hoc recalibration, aggregation of calibrated models, and stagewise/boosting constructions, with explicit forms for Brier and log-loss.


\end{abstract}

\section{Introduction}
\label{sec:intro}

Probabilistic classifiers are increasingly deployed in settings where predicted probabilities directly drive downstream decisions (screening, triage, pricing, resource allocation).
In such pipelines, a number interpreted as a probability should support decision-theoretic reasoning, which motivates {\em calibration}: informally, among instances assigned probability $s$, the empirical frequency of the event should be close to $s$.
At the same time, models are trained and compared using {\em proper scoring rules}, which combine calibration and sharpness/refinement into a single expected loss.

A central message of this paper is that calibration is inherently {\em relative to an information level}.
A predictor typically compresses features $\bX$ into a lower-dimensional score $S=s(\bX)$.
Even if one post-processes $S$ to make it calibrated, the compression $\bX \mapsto S$ can cause unavoidable information loss \citep{tasche2021calib}, invisible to calibration diagnostics alone.
Conversely, ensembling procedures (averaging, stacking, stagewise boosting) can improve an overall proper score while creating or amplifying miscalibration.
We develop decomposition identities for arbitrary proper losses making these trade-offs explicit and actionable.

\subsection{Setup and notation}
\label{sec:setup}

We work on a probability space $(\Omega,\mathcal F,\mathbb P)$.
The outcome $Y:\Omega\to\mathcal Y$ takes values in a {\em finite} label set (binary $\mathcal Y=\{0,1\}$, multi-class $\mathcal Y=\{1,\dots,K\}$), and features are given by $\bX:\Omega\to\mathcal X$. We denote by $\Delta(\mathcal Y)$ the probability simplex.

A probabilistic classifier is a measurable map $s:\mathcal X\to\Delta(\mathcal Y)$, and write $S:=s(\bX)\in\Delta(\mathcal Y)$ the induced random prediction.
Let $\mathcal H_{\bX}:=\sigma(\bX)$ and $\mathcal H_S:=\sigma(S)\subseteq \mathcal H_{\bX}$.

For any sub-$\sigma$-algebra $\mathcal A\subseteq\mathcal F$, we define the conditional law at level $\mathcal A$ by
\[
Q_{\mathcal A}:=\mathbb P(Y\in\cdot\mid \mathcal A)\in\Delta(\mathcal Y),
\]
understood as a $\Delta(\mathcal Y)$-valued $\mathcal A$-measurable random variable (up to a.s.\ equality).
We will repeatedly use
\[
\begin{cases}
    Q:=Q_{\mathcal H_{\bX}}=\mathbb P(Y\in\cdot\mid \bX),
\\
C:=Q_{\mathcal H_S}=\mathbb P(Y\in\cdot\mid S),
\end{cases}
\]
where $Q$ is the {\em true conditional law at the feature level} and $C$ is the {\em calibrated version} of the score.
Perfect calibration is the identity $S=C$ almost surely (equivalently $\mathbb P(Y\in\cdot\mid S)=S$ a.s.).

We evaluate probabilistic predictions with a loss $\ell:\Delta(\mathcal Y)\times\mathcal Y\to\mathbb R$ and its conditional risk (or {\em expected score})
\[
L(p,q):=\mathbb E_{Y\sim q}\big[\ell(p,Y)\big],\qquad p,q\in\Delta(\mathcal Y).
\]
When $\ell$ is proper, it induces a generalized entropy $\mathcal E_\ell(q):=L(q,q)$ and a nonnegative regret/divergence $d_\ell(p,q):=L(p,q)-\mathcal E_\ell(q)$.
Canonical examples include log-loss (yielding Kullback--Leibler divergence) and the Brier score (yielding squared Euclidean distance).

\subsection{Related work}

Proper scoring rules are classical tools for probabilistic forecasting and decision-theoretic evaluation \citep{gneiting2007probabilistic}.
Decomposition ideas trace back to Murphy-type reliability--resolution--uncertainty identities and extensions to proper scores and multi-class settings \citep{brocker2009reliability}.
In machine learning, post-hoc recalibration methods such as Platt scaling and isotonic regression are widely used \citep{platt1999probabilistic}, and recent work emphasizes how calibration interacts with training dynamics, early stopping, and refinement--calibrate phenomena \citep{kullflach2015novel,dimitriadis2023triptych,berta2025earlystopping}.
Ensembling and stagewise procedures complicate calibration: they may improve expected proper losses while degrading conditional reliability, motivating analysis beyond a single score.
Our key contribution is not the tower property, but an {\em information-level} view of proper-score decompositions that separates miscalibration from {\em information loss} induced by compression (e.g., $\bX\mapsto S$), a phenomenon invisible to calibration curves.

\subsection{Contributions and organization}

\paragraph{Contributions.}
Our contributions are as follows.
{\setlength{\leftmargini}{0em}
\setlength{\itemsep}{0pt}
\setlength{\topsep}{0pt}
\setlength{\parsep}{0pt}
\begin{itemize}
\item[-] \textbf{Information-level decompositions for arbitrary proper losses.}
We derive level and chain identities for $\mathcal A\subseteq\mathcal B$ that split regret from conditional entropy. When applied to $\mathcal H_S\subseteq\mathcal H_{\bX}$, this specializes to the decomposition of $\E[\ell(S,Y)]$ used in \cite{LebelMV23} into miscalibration, a {\em grouping} term measuring information loss $\bX\to S$, and irreducible feature-level uncertainty.
\item[-] \textbf{Implications for recalibration and ensembles.}
We view recalibration as projection onto $\sigma(S)$ and characterize when aggregation breaks calibration and how stagewise/boosting trades reliability for refinement.
\item[-] \textbf{Concrete instantiations.}
We derive closed forms for the Brier score and log-loss.
\end{itemize}}

While our proofs rely on standard conditional expectation arguments, the contribution is an operational separation of three failure modes under {\em any} proper loss: (i) \textbf{reliability} (proper regret at the score level), (ii) \textbf{information loss} induced by compression $\boldsymbol X\!\to\!S$ (grouping), and (iii) \textbf{irreducible} uncertainty at the feature level. This yields actionable guidance: post-hoc recalibration can only reduce (i); to reduce (ii) one must change the score/representation (retain more information);
and (iii) can only be reduced by richer features or better data.

\paragraph{Organization.}
Section~\ref{sec:decompositions} presents the one-level and chain decompositions and interprets them for probabilistic classification.
Section~\ref{sec:well-calibrated} discusses global balance versus calibration and practical diagnostics.
Section~\ref{sec:recalibration} studies post-hoc recalibration under proper losses.
Section~\ref{sec:calib-ensembles} covers aggregation and stagewise/boosting constructions and illustrates the framework
on Brier and log-loss.

\section{Decompositions}\label{sec:decompositions} 

\subsection{General Decompositions}

\begin{theorem}\label{thm:onelevel}
Let $\mathcal A\subseteq\mathcal F$ and let $T$ be a $\mathcal A$-measurable random variable taking values in $\Delta(\mathcal Y)$. Then
\[
\mathbb E\big[\ell(T,Y)\big]
=
\mathbb E\big[d_\ell(T,Q_{\mathcal A})\big]
+
\mathbb E\big[\mathcal{E}_\ell(Q_{\mathcal A})\big].
\]
\end{theorem}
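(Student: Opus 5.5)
The proof rests on the tower property, conditioning on $\mathcal A$. Since $Y$ takes values in the finite set $\mathcal Y$, we can write
\[
\ell(T,Y)=\sum_{y\in\mathcal Y}\ell(T,y)\,\mathbf 1\{Y=y\}.
\]
Each coefficient $\ell(T,y)$ is $\mathcal A$-measurable, because $T$ is $\mathcal A$-measurable and $\ell(\cdot,y)$ is taken to be measurable. Taking conditional expectations term by term ("pulling out what is known") gives, almost surely,
\[
\mathbb E\big[\ell(T,Y)\mid\mathcal A\big]=\sum_{y}\ell(T,y)\,Q_{\mathcal A}(\{y\})=L(T,Q_{\mathcal A}).
\]

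Next comes a purely pointwise identity. By the definitions $\mathcal E_\ell(q)=L(q,q)$ and $d_\ell(p,q)=L(p,q)-\mathcal E_\ell(q)$, we have for every realization
\[
L(T,Q_{\mathcal A})=d_\ell(T,Q_{\mathcal A})+\mathcal E_\ell(Q_{\mathcal A}).
\]
Taking unconditional expectations and applying the tower property $\mathbb E[\ell(T,Y)]=\mathbb E\big[\mathbb E[\ell(T,Y)\mid\mathcal A]\big]$ then yields the statement of Theorem~\ref{thm:onelevel}. Properness is not needed for the identity itself; it only guarantees $d_\ell\ge 0$, which is what lets us read the first term as a reliability penalty.

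The main obstacle is integrability and measurability, not algebra. For log-loss, $\ell$ can equal $+\infty$, so the split of the expectation into two terms requires care. I would first assume $\ell$ is bounded below, which holds for standard proper losses (log-loss and Brier are nonnegative). Under that assumption, $\mathcal E_\ell(Q_{\mathcal A})\ge \inf\ell$ and $d_\ell\ge 0$, so both expectations are well defined in $(-\infty,+\infty]$. Linearity of expectation then holds for quasi-integrable variables with a common lower bound, and the identity is valid in the extended reals.

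Measurability of $\mathcal E_\ell(Q_{\mathcal A})$ follows once we choose a version of $Q_{\mathcal A}$ that is $\mathcal A$-measurable and simplex-valued. Because $\mathcal Y$ is finite, this amounts to taking versions of $\mathbb P(Y=y\mid\mathcal A)$ that are nonnegative and sum to one; these exist and are unique almost surely.
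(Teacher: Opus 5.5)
Your proposal is correct and follows the same route as the paper: the tower property gives $\mathbb E[\ell(T,Y)]=\mathbb E[L(T,Q_{\mathcal A})]$, and the pointwise identity $L(T,Q_{\mathcal A})=d_\ell(T,Q_{\mathcal A})+\mathcal E_\ell(Q_{\mathcal A})$ then splits it into the two terms. You also supply two details the paper leaves implicit: the finite-sum "pull out what is known" justification of $\mathbb E[\ell(T,Y)\mid\mathcal A]=L(T,Q_{\mathcal A})$, and the extended-real integrability argument for losses bounded below, such as log-loss.
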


The first term $\mathbb E[d_\ell(T,Q_{\mathcal A})]$ is a {\em measurable regret} at information level $\mathcal A$ (and becomes a calibration/reliability term when $\mathcal A=\mathcal H_T$).
The second term $\mathbb E[\mathcal E_\ell(Q_{\mathcal A})]$ is the {\em residual uncertainty} (or Bayes risk) given $\mathcal A$.
When $\ell$ is (strictly) proper, $d_\ell(\cdot,\cdot)\ge 0$ (and $d_\ell(p,q)=0$ iff $p=q$), so each regret term quantifies a genuine loss relative to the best predictor available at that information level.

\begin{theorem}[Chain decomposition for nested information levels]
\label{thm:chain}
Let $\mathcal A\subseteq \mathcal B\subseteq \mathcal F$ and let $T$ be a $\mathcal A$-measurable random variable taking values in $\Delta(\mathcal Y)$. Then
\[
\mathbb E[\ell(T,Y)]
=
\mathbb E\big[d_\ell(T,Q_{\mathcal A})\big]
+\mathbb E\big[d_\ell(Q_{\mathcal A},Q_{\mathcal B})\big]
+\mathbb E\big[\mathcal E_\ell(Q_{\mathcal B})\big].
\]
\end{theorem}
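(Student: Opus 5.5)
The plan is to derive the chain identity from Theorem~\ref{thm:onelevel} applied twice, together with the tower property relating $Q_{\mathcal A}$ and $Q_{\mathcal B}$.

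First, apply Theorem~\ref{thm:onelevel} at level $\mathcal A$ to $T$. This gives
\[
\mathbb E[\ell(T,Y)]=\mathbb E[d_\ell(T,Q_{\mathcal A})]+\mathbb E[\mathcal E_\ell(Q_{\mathcal A})].
\]
It therefore suffices to show that
\[
\mathbb E[\mathcal E_\ell(Q_{\mathcal A})]=\mathbb E[d_\ell(Q_{\mathcal A},Q_{\mathcal B})]+\mathbb E[\mathcal E_\ell(Q_{\mathcal B})].
\]

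For this, apply Theorem~\ref{thm:onelevel} a second time, now at level $\mathcal B$ with predictor $T':=Q_{\mathcal A}$. Since $\mathcal A\subseteq\mathcal B$, the variable $Q_{\mathcal A}$ is $\mathcal B$-measurable, so the theorem applies and yields
\[
\mathbb E[\ell(Q_{\mathcal A},Y)]=\mathbb E[d_\ell(Q_{\mathcal A},Q_{\mathcal B})]+\mathbb E[\mathcal E_\ell(Q_{\mathcal B})].
\]
We also apply Theorem~\ref{thm:onelevel} at level $\mathcal A$ with $T=Q_{\mathcal A}$ itself. Because $d_\ell(q,q)=L(q,q)-\mathcal E_\ell(q)=0$, this gives $\mathbb E[\ell(Q_{\mathcal A},Y)]=\mathbb E[\mathcal E_\ell(Q_{\mathcal A})]$. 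Combining the two displays gives the required identity, and substituting it into the first step proves the claim.

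The only delicate point is integrability, that is, making sure the expectations are finite so that the additive splits are legitimate. For log-loss, for instance, $\ell(Q_{\mathcal A},Y)$ could have infinite expectation. I would handle this the same way Theorem~\ref{thm:onelevel} implicitly does: assume $\mathbb E|\ell(T,Y)|<\infty$ and $\mathbb E|\ell(Q_{\mathcal A},Y)|<\infty$, or note that all terms are nonnegative or bounded below so the identities hold in $[0,\infty]$. The measurability of $Q_{\mathcal A}$ with respect to $\mathcal B$ is immediate, since the chosen version is $\mathcal A$-measurable. The tower identity $\mathbb E[Q_{\mathcal B}\mid\mathcal A]=Q_{\mathcal A}$ is already encoded in Theorem~\ref{thm:onelevel} and is not needed separately.
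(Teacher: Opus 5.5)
Your proof is correct and matches the paper's argument. It applies Theorem~\ref{thm:onelevel} at level $\mathcal A$ for $T$, then at level $\mathcal B$ for $Q_{\mathcal A}$, and identifies $\mathbb E[\ell(Q_{\mathcal A},Y)]=\mathbb E[\mathcal E_\ell(Q_{\mathcal A})]$; the paper gets this identity from the tower property, and your use of Theorem~\ref{thm:onelevel} with $d_\ell(q,q)=0$ is the same computation.

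One aside is wrong, though it does not affect the proof: for log-loss, $\mathbb E[\ell(Q_{\mathcal A},Y)]=\mathbb E[H(Q_{\mathcal A})]\le\log|\mathcal Y|$ is always finite. Only $\mathbb E[\ell(T,Y)]$ and the first regret term can be infinite.
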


\subsection{Back to Calibration}

Take $T=S$, $\mathcal A=\mathcal{H}_S$, and $\mathcal B=\mathcal{H}_{\bX}$.
Then $Q_{\mathcal A}=C$ and $Q_{\mathcal B}=Q$, hence
\begin{align}
\mathbb{E}[\ell(S,Y)]
& =
\underbrace{\mathbb{E}[d_\ell(S,C)]}_{\text{calibration / reliability}}
+
\underbrace{\mathbb{E}[d_\ell(C,Q)]}_{\text{grouping (information loss from $\bX$ to $S$)}}\notag\\
&+
\underbrace{\mathbb{E}[\mathcal E_\ell(Q)]}_{\text{irreducible at the $\bX$ level}}.\label{eq:clf-chain}
\end{align}
Equation~\eqref{eq:clf-chain} is reminiscent of classical reliability--resolution--uncertainty decompositions, but here it arises as a special case of a general
{\em information-level chain rule} valid for arbitrary proper losses, and the middle term admits a direct interpretation as the {\em information loss}
by the compression $\boldsymbol X\!\to\!S$, which is precisely what governs what recalibration and ensemble projections can (and cannot) fix, as mentioned in \cite{tasche2021calib, LebelMV23}.

\begin{lemma}[When does the grouping term vanish?]\label{rem:grouping-zero}
Assume $\ell$ is strictly proper. Then $\mathbb E[d_\ell(C,Q)]=0$ if and only if $C=Q$ almost surely.
Equivalently, the ``true'' conditional law $Q=\mathbb P(Y\in\cdot\mid \bX)$ is $\sigma(S)$-measurable
(i.e., $S$ is sufficient for $\bX$ with respect to predicting $Y$).
\end{lemma}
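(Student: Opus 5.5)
The plan is to prove the two equivalences in turn: first $\mathbb E[d_\ell(C,Q)]=0 \iff C=Q$ a.s., and then $C=Q$ a.s. $\iff$ $Q$ is $\sigma(S)$-measurable. Both rest on two facts. Strict propriety gives $d_\ell\ge 0$ pointwise, with equality only on the diagonal. And $C=\mathbb E[Q\mid\mathcal H_S]$ holds by the tower property, since $\mathcal H_S\subseteq\mathcal H_{\bX}$. Throughout, $C$ and $Q$ are read as $\Delta(\mathcal Y)$-valued random variables, defined up to a.s. equality. Because $\mathcal Y$ is finite, these are just finite vectors of conditional probabilities, so no regular-conditional-distribution subtleties arise.

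\textbf{First equivalence.} For "if": if $C=Q$ a.s., then $d_\ell(C,Q)=d_\ell(Q,Q)=L(Q,Q)-\mathcal E_\ell(Q)=0$ a.s., so the expectation is $0$. For "only if": the random variable $d_\ell(C,Q)$ is nonnegative by propriety. We also need it to be measurable and its expectation to be well defined, which holds since $L(p,q)=\sum_y q_y\ell(p,y)$ is a composition of measurable maps. A nonnegative variable with zero expectation vanishes a.s., so $d_\ell(C,Q)=0$ a.s. Strict propriety then gives $C=Q$ on that full-measure event.

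The main obstacle is integrability. If $\ell$ can take the value $+\infty$ (as log-loss does), the quantities must be interpreted in $[0,\infty]$. I would handle this by noting that the expectation of a $[0,\infty]$-valued variable is always defined, and that zero expectation still forces a.s. vanishing. Any finiteness issues belong to the decomposition in Theorem~\ref{thm:chain}, not to this lemma.

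\textbf{Second equivalence.} If $C=Q$ a.s., then $Q$ agrees a.s. with the $\sigma(S)$-measurable variable $C$, so $Q$ is $\sigma(S)$-measurable, modulo null sets (after completing, or by choosing that version). Conversely, suppose $Q$ is $\sigma(S)$-measurable. By the tower property, $C=\mathbb P(Y\in\cdot\mid S)=\mathbb E[\mathbb P(Y\in\cdot\mid\bX)\mid S]=\mathbb E[Q\mid S]$, applied componentwise for each $y\in\mathcal Y$. Since $Q$ is $\sigma(S)$-measurable and bounded, this conditional expectation equals $Q$ a.s.

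Finally, I would add the interpretation: $Q$ being $\sigma(S)$-measurable means $\mathbb P(Y\in\cdot\mid\bX)=\mathbb P(Y\in\cdot\mid S)$. This says $Y$ is conditionally independent of $\bX$ given $S$, which is exactly sufficiency of $S$ for predicting $Y$.
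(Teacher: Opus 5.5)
Your proof is correct and follows the same route as the paper. For the first equivalence you use nonnegativity of $d_\ell$ plus strict propriety, and for the second you identify $C=Q$ a.s.\ with $Q$ admitting a $\sigma(S)$-measurable version; your explicit use of $C=\mathbb E[Q\mid S]$ (tower property, since $\sigma(S)\subseteq\sigma(\bX)$) spells out the converse direction that the paper only asserts.
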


\begin{example}[Perfect calibration does not prevent large information loss]
\label{ex:calibrated-yet-grouping}
Consider binary $Y\in\{0,1\}$. Let $X\in\{1,2\}$ with $\mathbb P(X=1)=\mathbb P(X=2)=1/2$,
and
\[
\mathbb P(Y=1\mid X=1)=0.9,\qquad \mathbb P(Y=1\mid X=2)=0.1.
\]
Define the {\em constant} score $S\equiv 1/2$. Then $C=\mathbb E[Y\mid S]=1/2=S$,
so $S$ is perfectly calibrated and the reliability term vanishes.

However, the true conditional law $Q=\mathbb E[Y\mid X]$ takes values $0.9$ or $0.1$,
so the grouping term is strictly positive. For the Brier loss,
\[
\mathbb E[(C-Q)^2]=\tfrac12(0.5-0.9)^2+\tfrac12(0.5-0.1)^2=0.16,
\]
and for the log-loss,
\begin{align*}
\mathbb E[\mathrm{KL}(Q\|C)]
&=
\tfrac12\mathrm{KL}(\mathrm{Bern}(0.9)\|\mathrm{Bern}(0.5))\\
&+\tfrac12\mathrm{KL}(\mathrm{Bern}(0.1)\|\mathrm{Bern}(0.5)).    
\end{align*}
Thus, calibration alone is compatible with severe information loss (no discrimination).
\end{example}

Post-hoc recalibration ideally reduces the term $\mathbb{E}[d_\ell(S,C)]$, but without changing discrimination, it cannot directly reduce the grouping term $\mathbb{E}[d_\ell(C,Q)]$.
This tension lies at the heart of recent analyses of training dynamics and early stopping~\citep{berta2025earlystopping}.
\cite{brocker2009reliability} generalizes the reliability/resolution decomposition to the multi-class case and to proper scores.
In our loss-based formalism, one may write an ``uncertainty + reliability $-$ resolution'' form by introducing the marginal $\bar Q := \mathbb{P}(Y\in\cdot)$ (coined the ``climatology'' in \cite{gneiting2007probabilistic}).

\begin{proposition}[Uncertainty--resolution--reliability form]
\label{prop:urc}
Let $S$ be a prediction and let $C=\mathbb{P}(Y\in\cdot\mid S)$.
Then
\[
\mathbb{E}[\ell(S,Y)]
=
\mathcal{E}_\ell(\bar Q)
-\mathbb{E}\big[d_\ell(\bar Q, C)\big]
+\mathbb{E}\big[d_\ell(S,C)\big],
\]
where $\mathbb{E}[d_\ell(\bar Q,C)]$ plays the role of {\em resolution} (information gain relative to climatology) and $\mathbb{E}[d_\ell(S,C)]$ that of {\em reliability}.
\end{proposition}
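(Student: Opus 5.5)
Apply Theorem~\ref{thm:onelevel} with $T=S$ and $\mathcal A=\mathcal H_S$, so that $Q_{\mathcal A}=C$:
\[
\mathbb E[\ell(S,Y)]=\mathbb E[d_\ell(S,C)]+\mathbb E[\mathcal E_\ell(C)].
\]
The reliability term is already in place. It therefore suffices to prove
\[
\mathbb E[\mathcal E_\ell(C)]=\mathcal E_\ell(\bar Q)-\mathbb E[d_\ell(\bar Q,C)].
\]

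To get this, apply Theorem~\ref{thm:onelevel} a second time, now with the constant predictor $T\equiv\bar Q$ and the same level $\mathcal A=\mathcal H_S$. A constant is measurable with respect to any $\sigma$-algebra, so the theorem applies and gives
\[
\mathbb E[\ell(\bar Q,Y)]=\mathbb E[d_\ell(\bar Q,C)]+\mathbb E[\mathcal E_\ell(C)].
\]
Since $\bar Q$ is the law of $Y$, the left-hand side is $\mathbb E_{Y\sim\bar Q}[\ell(\bar Q,Y)]=L(\bar Q,\bar Q)=\mathcal E_\ell(\bar Q)$. Rearranging yields the required identity for $\mathbb E[\mathcal E_\ell(C)]$. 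Substituting it into the first display gives the statement.

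One can also check the key step directly. Because $\ell(\bar Q,\cdot)$ is a fixed function on the finite set $\mathcal Y$, we have $L(\bar Q,C)=\sum_y C(y)\ell(\bar Q,y)$. The tower property gives $\mathbb E[C]=\bar Q$ componentwise, so $\mathbb E[L(\bar Q,C)]=L(\bar Q,\bar Q)$. Writing $d_\ell(\bar Q,C)=L(\bar Q,C)-\mathcal E_\ell(C)$ and taking expectations then gives the identity.

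The only point needing care is integrability. We need $\ell(\bar Q,\cdot)$, $\mathcal E_\ell(C)$ and $\ell(S,Y)$ to be integrable so that rearranging expectations is legitimate. This is automatic for the Brier score. For the log-loss, $\ell(\bar Q,y)=-\log\bar Q(y)$ is finite whenever every label has positive marginal probability, and we may assume this without loss of generality by dropping null labels. Otherwise, the identity should be read in the extended reals under the same standing integrability assumptions as Theorem~\ref{thm:onelevel}.
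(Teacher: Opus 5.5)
Your proposal is correct and is essentially the paper's proof. You apply Theorem~\ref{thm:onelevel} once with $T=S$ and once with the constant predictor $T\equiv\bar Q$ at level $\mathcal H_S$, then use $\mathbb E[\ell(\bar Q,Y)]=\mathcal E_\ell(\bar Q)$; your direct tower-property check and integrability remarks are sound additions.
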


\subsection{Fundamental uncertainty}

We introduce a latent variable $Z$ (state of the world) and the {\em objective chance}
\[
\Pi := \mathbb{P}(Y\in\cdot \mid Z)\in \Delta(\mathcal Y).
\]
The object $\Pi$ formalizes the idea of a ``true probability'' that may vary across individuals/situations even at fixed $\bX$ (unobserved covariates, non-stationarity, partially observed causal mechanisms, etc.).

Taking the nested chain $\mathcal H_S\subseteq \mathcal H_{\bX}\subseteq \mathcal H_Z$,
Theorem~\ref{thm:chain} can be iterated and yields a four-term decomposition.

\begin{theorem}
\label{thm:four}
Assume $\mathcal H_S\subseteq \mathcal H_{\bX}\subseteq \mathcal H_Z$. Then
\begin{align*}
\mathbb{E}[\ell(S,Y)]
&=
\mathbb{E}[d_\ell(S,C)]
+\mathbb{E}[d_\ell(C,Q)]\\
&+\mathbb{E}[d_\ell(Q,\Pi)]
+\mathbb{E}[\mathcal{E}_\ell(\Pi)].
\end{align*}
\end{theorem}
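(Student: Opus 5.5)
The plan is to iterate Theorem~\ref{thm:chain} twice. The only genuinely new ingredient is a ``three-level'' version of that result. First, apply Theorem~\ref{thm:chain} with $T=S$, $\mathcal A=\mathcal H_S$ and $\mathcal B=\mathcal H_{\bX}$. Since $Q_{\mathcal H_S}=C$ and $Q_{\mathcal H_{\bX}}=Q$, this gives exactly the three-term identity \eqref{eq:clf-chain}:
\[
\mathbb E[\ell(S,Y)]=\mathbb E[d_\ell(S,C)]+\mathbb E[d_\ell(C,Q)]+\mathbb E[\mathcal E_\ell(Q)].
\]
It then remains to show that the last term splits as
\[
\mathbb E[\mathcal E_\ell(Q)]=\mathbb E[d_\ell(Q,\Pi)]+\mathbb E[\mathcal E_\ell(\Pi)].
\]

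For the split, apply Theorem~\ref{thm:onelevel} at level $\mathcal A=\mathcal H_Z$ with the predictor $T=Q$. This choice is legitimate because $Q$ is $\mathcal H_{\bX}$-measurable and $\mathcal H_{\bX}\subseteq\mathcal H_Z$, so $Q$ is $\mathcal H_Z$-measurable. Also $Q_{\mathcal H_Z}=\Pi$. The theorem therefore yields
\[
\mathbb E[\ell(Q,Y)]=\mathbb E[d_\ell(Q,\Pi)]+\mathbb E[\mathcal E_\ell(\Pi)].
\]
Next, apply Theorem~\ref{thm:onelevel} at level $\mathcal H_{\bX}$ with $T=Q$. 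The regret term is $d_\ell(Q,Q)=0$, so $\mathbb E[\ell(Q,Y)]=\mathbb E[\mathcal E_\ell(Q)]$. Combining the two displays gives the desired split. Substituting it into the three-term identity produces the four-term decomposition.

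Alternatively, one could apply Theorem~\ref{thm:chain} once with $\mathcal A=\mathcal H_S$ and $\mathcal B=\mathcal H_Z$, and then decompose $\mathbb E[d_\ell(C,\Pi)]$. However, this route requires an extra tower-property identity of the form
\[
\mathbb E[d_\ell(C,\Pi)]=\mathbb E[d_\ell(C,Q)]+\mathbb E[d_\ell(Q,\Pi)].
\]
That identity is itself just the combination of the steps above, so I would use the direct route.

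The main obstacle is technical rather than conceptual. One must make sure that $\mathbb E[\ell(Q,Y)]$ and the other expectations are well defined, for example under an integrability assumption on $\ell$; for log-loss, some terms may be $+\infty$. One must also confirm that the identity $\mathbb E[L(T,Q_{\mathcal A})]=\mathbb E[\ell(T,Y)]$ behind Theorem~\ref{thm:onelevel} applies when $T$ is itself a conditional law. It does, since $T=Q$ is $\mathcal H_Z$-measurable and $\ell(p,\cdot)$ takes finitely many values over the finite label set. I will state these integrability conventions, the same ones implicitly used for Theorems~\ref{thm:onelevel} and~\ref{thm:chain}, and note that the hypothesis $\mathcal H_S\subseteq\mathcal H_{\bX}\subseteq\mathcal H_Z$ is used exactly in the two measurability checks: that $S$ is $\mathcal H_S$-measurable and that $Q$ is $\mathcal H_Z$-measurable.
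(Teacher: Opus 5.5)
Your proposal is correct and follows the paper's proof essentially step for step. You apply Theorem~\ref{thm:chain} to $\mathcal H_S\subseteq\mathcal H_{\bX}$ to get \eqref{eq:clf-chain}, then apply Theorem~\ref{thm:onelevel} with $T=Q$ at level $\mathcal H_Z$, and finally identify $\mathbb E[\ell(Q,Y)]=\mathbb E[\mathcal E_\ell(Q)]$. The paper does this last step via the tower property directly, which is equivalent to your use of Theorem~\ref{thm:onelevel} at level $\mathcal H_{\bX}$ with $d_\ell(Q,Q)=0$. One small correction: the inclusion $\mathcal H_S\subseteq\mathcal H_{\bX}$ is used to supply the nesting that Theorem~\ref{thm:chain} requires, not to make $S$ $\mathcal H_S$-measurable, which holds trivially.
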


We have here the following interpretations,
\begin{itemize}
\item $\mathbb{E}[d_\ell(S,C)]$: {\em miscalibration} (lack of reliability).
\item $\mathbb{E}[d_\ell(C,Q)]$: {\em grouping} / information loss induced by the map $\bX\mapsto S$ (even if perfectly calibrated, $S$ may still ``collapse'' heterogeneity).
\item $\mathbb{E}[d_\ell(Q,\Pi)]$: {\em chance heterogeneity} remaining at the $\bX$ information level (the true probability is not identified by $\bX$).
\item $\mathbb{E}[\mathcal{E}_\ell(\Pi)]$: {\em intrinsic noise} at the $Z$ level (what cannot be reduced even with $Z$).
\end{itemize}
This clarifies that the ``irreducible'' component is relative to the observability level (here $\bX$), as already discussed in epistemic/aleatoric decompositions~\citep{kullflach2015novel}.

Let $k=2$ and identify $p\in\Delta(\mathcal Y)$ with $p=\mathbb{P}(Y=1)$.
We recover the same results as in \cite{tasche2021calib} when considering the special case of the Brier loss, $\ell(p,y)=(p-y)^2$.

\begin{corollary}[Variance-type decomposition (Brier)]
\label{cor:brier}
In the binary case, with $C=\mathbb{E}[Y\mid S]$ and $Q=\mathbb{E}[Y\mid \bX]$,
\[
\mathbb{E}[(S-Y)^2] = \mathbb{E}[(S-C)^2] + \mathbb{E}[(C-Q)^2] + \mathbb{E}[Q(1-Q)].
\]
If moreover $\Pi=\mathbb{E}[Y\mid Z]$, then
\[
\mathbb{E}[Q(1-Q)] = \mathbb{E}[(Q-\Pi)^2]+\mathbb{E}[\Pi(1-\Pi)].
\]
\end{corollary}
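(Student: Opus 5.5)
The plan is to specialize Theorem~\ref{thm:chain} (and Theorem~\ref{thm:four} for the second identity) to the binary Brier loss. The only work is to compute the entropy $\mathcal E_\ell$ and the divergence $d_\ell$ explicitly.

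\textbf{Entropy and divergence.} With $\ell(p,y)=(p-y)^2$, identify $q\in\Delta(\mathcal Y)$ with $q=\mathbb P(Y=1)$. Then
\[
L(p,q)=q(1-p)^2+(1-q)p^2=(p-q)^2+q(1-q),
\]
which follows by expanding both sides: each equals $p^2-2pq+q$. Hence $\mathcal E_\ell(q)=L(q,q)=q(1-q)$ and $d_\ell(p,q)=L(p,q)-\mathcal E_\ell(q)=(p-q)^2$. In particular the Brier loss is strictly proper.

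\textbf{First identity.} Apply Theorem~\ref{thm:chain} with $T=S$, $\mathcal A=\mathcal H_S$ and $\mathcal B=\mathcal H_{\bX}$, exactly as in \eqref{eq:clf-chain}. In the binary identification, $Q_{\mathcal A}=\mathbb P(Y=1\mid S)=\mathbb E[Y\mid S]=C$, and likewise $Q_{\mathcal B}=\mathbb E[Y\mid \bX]=Q$. Substituting $d_\ell(S,C)=(S-C)^2$, $d_\ell(C,Q)=(C-Q)^2$ and $\mathcal E_\ell(Q)=Q(1-Q)$ gives the first display.

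\textbf{Second identity.} Apply Theorem~\ref{thm:onelevel} twice to the predictor $T=Q$, which is $\mathcal H_{\bX}$-measurable and hence $\mathcal H_Z$-measurable by the nesting assumption of Theorem~\ref{thm:four}.
\begin{itemize}
\item At level $\mathcal H_{\bX}$: $\mathbb E[(Q-Y)^2]=\mathbb E[\mathcal E_\ell(Q)]=\mathbb E[Q(1-Q)]$, since $d_\ell(Q,Q)=0$.
\item At level $\mathcal H_Z$: $\mathbb E[(Q-Y)^2]=\mathbb E[(Q-\Pi)^2]+\mathbb E[\Pi(1-\Pi)]$.
\end{itemize}
Equating the two expressions for $\mathbb E[(Q-Y)^2]$ yields the claim. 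Alternatively, use the chain rule with $\mathcal A=\mathcal B=\mathcal H_{\bX}$ refined to $\mathcal H_Z$.

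\textbf{Main obstacle.} Nothing here is deep; the only point needing care is the nesting. The identity requires $\mathcal H_{\bX}\subseteq\mathcal H_Z$, or at least that $Q$ be $\sigma(Z)$-measurable, as assumed in Theorem~\ref{thm:four}. I would state this hypothesis explicitly when invoking the theorems.
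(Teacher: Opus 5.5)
Your proposal is correct and follows essentially the same route as the paper: compute $L(p,q)=(p-q)^2+q(1-q)$ to get $d_\ell(p,q)=(p-q)^2$ and $\mathcal E_\ell(q)=q(1-q)$, apply Theorem~\ref{thm:chain} to $\mathcal H_S\subseteq\mathcal H_{\bX}$, and apply Theorem~\ref{thm:onelevel} with $T=Q$, $\mathcal A=\mathcal H_Z$ for the second identity. You are also right to flag that the second identity needs $Q$ to be $\sigma(Z)$-measurable, via the nesting $\mathcal H_{\bX}\subseteq\mathcal H_Z$ from Theorem~\ref{thm:four}, and to spell out the step $\mathbb E[(Q-Y)^2]=\mathbb E[Q(1-Q)]$; the paper leaves both implicit.
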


In the binary Brier case, $Q=\mathbb E[Y\mid X]$ and $C=\mathbb E[Y\mid S]$ satisfy
$C=\mathbb E[Q\mid S]$. Hence
\[
\mathbb E[(C-Q)^2]
=
\mathbb E\left[\mathrm{Var}(Q\mid S)\right],
\]
i.e., the grouping term is the expected conditional variance of the Bayes score $Q$ after compressing $\bX\mapsto S$. Equivalently, by the law of total variance,
\[
\mathbb E[C(1-C)] = \mathbb E[Q(1-Q)] + \mathbb E[(C-Q)^2],
\]
so moving from $\mathcal H_X$ to $\mathcal H_S$ increases residual uncertainty exactly by the information-loss term $\mathbb E[(C-Q)^2]$.

\begin{corollary}[Log-loss: entropies and information]
\label{cor:logloss-info}
Assume the log-loss $\ell_{\log}(p,y):=-\log p_y$ on a finite label set.
Then the induced divergence and generalized entropy are $d_{\log}(p,q)=\mathrm{KL}(q\|p)$ and $\mathrm{E}_{\log}(q)=H(q)$.

For any $\mathcal A\subseteq \mathcal B\subseteq \mathcal F$ and any
$\mathcal A$-measurable predictor $T:\Omega\to\Delta(\mathcal Y)$,
\[
\mathbb E[\ell_{\log}(T,Y)]
=
\mathbb E\left[\mathrm{KL}(Q_{\mathcal A}\|T)\right]
+
\mathbb E\left[\mathrm{KL}(Q_{\mathcal B}\|Q_{\mathcal A})\right]
+
H(Y\mid \mathcal B),
\]
where $H(Y\mid \mathcal B):=\mathbb E\left[H(Q_{\mathcal B})\right]$.
Moreover,
\[
\mathbb E\left[\mathrm{KL}(Q_{\mathcal B}\|Q_{\mathcal A})\right]
=
I(Y;\mathcal B\mid \mathcal A).
\]

In particular, with $\mathcal A=\mathcal H_S$ and $\mathcal B=\mathcal H_X$,
\[
\mathbb E[-\log S_Y]
=
\mathbb E[\mathrm{KL}(C\|S)]
+
I(Y;X\mid S)
+
H(Y\mid X).
\]
\end{corollary}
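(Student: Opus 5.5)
The plan has three parts: compute the induced divergence and entropy for the log-loss directly, plug them into Theorem~\ref{thm:chain}, and then identify the middle term as a conditional mutual information.

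\textbf{Induced divergence and entropy.} For $p,q\in\Delta(\mathcal Y)$, $L(p,q)=-\sum_y q_y\log p_y$. Hence $\mathcal E_{\log}(q)=L(q,q)=H(q)$. Subtracting,
\[
d_{\log}(p,q)=-\sum_y q_y\log p_y+\sum_y q_y\log q_y=\mathrm{KL}(q\|p),
\]
with the usual conventions $0\log 0=0$ and $L=+\infty$ when $p_y=0<q_y$. Gibbs' inequality then gives strict propriety. Substituting these expressions into Theorem~\ref{thm:chain} with $T$, $\mathcal A$, $\mathcal B$ produces the three-term identity verbatim. By definition, $\mathbb E[H(Q_{\mathcal B})]=H(Y\mid\mathcal B)$.

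\textbf{Identifying the middle term.} I would define $I(Y;\mathcal B\mid\mathcal A):=H(Y\mid\mathcal A)-H(Y\mid\mathcal B)$, the standard conditional-entropy form. The most direct route is Theorem~\ref{thm:chain} applied with $T=Q_{\mathcal A}$. The first term then vanishes, leaving
\[
\mathbb E[-\log (Q_{\mathcal A})_Y]=\mathbb E[\mathrm{KL}(Q_{\mathcal B}\|Q_{\mathcal A})]+H(Y\mid\mathcal B).
\]
By Theorem~\ref{thm:onelevel} with $T=Q_{\mathcal A}$, the left side equals $H(Y\mid\mathcal A)$. The difference is exactly the mutual information.

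As an alternative derivation, one can expand $\mathbb E[\mathrm{KL}(Q_{\mathcal B}\|Q_{\mathcal A})]=\mathbb E[\sum_y (Q_{\mathcal B})_y\log (Q_{\mathcal B})_y]-\mathbb E[\sum_y (Q_{\mathcal B})_y\log (Q_{\mathcal A})_y]$. In the second piece, $\log(Q_{\mathcal A})_y$ is $\mathcal A$-measurable, so the tower property lets us replace $Q_{\mathcal B}$ by $\mathbb E[Q_{\mathcal B}\mid\mathcal A]=Q_{\mathcal A}$. This yields $-H(Y\mid\mathcal B)+H(Y\mid\mathcal A)$. This is the same content as above.

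\textbf{Main obstacle.} The only delicate step is integrability and the handling of infinite values. Entropies are bounded by $\log|\mathcal Y|$, so they are finite. On $\{(Q_{\mathcal A})_y=0\}$ we have $(Q_{\mathcal B})_y=0$ almost surely, because $\mathbb E[(Q_{\mathcal B})_y\mathbf 1\{(Q_{\mathcal A})_y=0\}]=\mathbb E[(Q_{\mathcal A})_y\mathbf 1\{\cdot\}]=0$. The KL term is therefore almost surely finite, and the tower step is legitimate because the integrand $(Q_{\mathcal B})_y\log(Q_{\mathcal A})_y$ is bounded below by... I would justify it via monotone convergence on the truncation $\log\max((Q_{\mathcal A})_y,\varepsilon)$. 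For a general $T$, the first term may be $+\infty$; the identity then holds in $[0,\infty]$.

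\textbf{Particular case.} Set $\mathcal A=\mathcal H_S$ and $\mathcal B=\mathcal H_{\bX}$, so that $Q_{\mathcal A}=C$ and $Q_{\mathcal B}=Q$. The specialization then follows immediately, with $I(Y;\mathcal H_{\bX}\mid\mathcal H_S)=I(Y;X\mid S)$ and $\ell_{\log}(S,Y)=-\log S_Y$.
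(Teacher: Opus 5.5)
Your proposal is correct. For the three-term identity it matches the paper's proof: compute $L(p,q)=H(q)+\mathrm{KL}(q\|p)$ and substitute into Theorem~\ref{thm:chain}.

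Where you differ is the ``moreover'' part. The paper simply asserts $I(Y;\mathcal B\mid\mathcal A)=\mathbb E[\mathrm{KL}(Q_{\mathcal B}\|Q_{\mathcal A})]$, treating the KL form as the standard definition of conditional mutual information. You instead define $I(Y;\mathcal B\mid\mathcal A):=H(Y\mid\mathcal A)-H(Y\mid\mathcal B)$ and prove the KL form, either by running Theorem~\ref{thm:chain} with $T=Q_{\mathcal A}$ or by the tower-property expansion. Your route is more self-contained for general $\sigma$-algebras, where there is no single textbook definition to cite. It also gives $H(Y\mid\mathcal A)\ge H(Y\mid\mathcal B)$ for free. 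Your integrability discussion addresses something the paper passes over: log-loss is unbounded, so regret terms can be $+\infty$. The paper's route is shorter and follows the usual information-theoretic convention.

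Two small tidy-ups. First, the half-finished ``bounded below by\ldots'' remark and the truncation argument are unnecessary. Since $-(Q_{\mathcal B})_y\log(Q_{\mathcal A})_y\ge 0$, the tower step holds directly for nonnegative integrands in $[0,\infty]$. Second, in the particular case, say explicitly (as the paper does) that $S=s(\bX)$ gives $\sigma(\bX,S)=\sigma(\bX)$. That is what makes the usual $I(Y;\bX\mid S)=H(Y\mid S)-H(Y\mid \bX,S)$ coincide with your $I(Y;\mathcal H_{\bX}\mid\mathcal H_S)$.
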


For the log-loss (cross-entropy), $d$ becomes a Kullback--Leibler divergence and $\mathcal{E}_{\ell}$ the Shannon entropy; the chain decomposition therefore links the expected loss to information-theoretic quantities.
This motivates diagnostics (Murphy diagrams, Murphy decomposition, etc.) and the ``triptych'' view of calibration/discrimination/performance~\citep{dimitriadis2023triptych}.


\section{Well-calibration}
\label{sec:well-calibrated}
\subsection{Global balance (marginal calibration).}
We call global balance the weak moment condition $\E[S]=\E[\onehot(Y)]$ (binary: $\E[S]=\E[Y]$), where $\onehot(Y)\in\Delta(\mathcal Y)$ is the one-hot encoding of $Y$.
It is strictly weaker than calibration since it does not constrain the conditional relationship between $S$ and $Y$; in particular, intercept-corrections can enforce global balance on a validation split without guaranteeing $\mathbb{P}(Y\in\cdot\mid S)=S$.
We therefore focus on conditional calibration and proper-loss reliability.

\subsection{Calibrated scores and a proper-loss notion of calibration error}
\label{sec:well-calibration-def}

Recall $Q:=\mathbb P(Y\in\cdot\mid \bX)$ and $C:=\mathbb P(Y\in\cdot\mid S)$.
Perfect calibration holds when $S=C$ almost surely.

\begin{definition}[Perfect calibration]
\label{def:perfect-calibration}
We say that $S$ is {\em perfectly calibrated} if
\[
\mathbb P(Y\in\cdot\mid S)=S \qquad\text{a.s.},
\]
equivalently $C=S$ a.s.
In the binary case, this is $\mathbb E[Y\mid S]=S$ a.s.
\end{definition}

Proper losses provide a canonical {\em calibration error} (reliability) through the induced regret.
Recall $L(p,q):=\mathbb E_{Y\sim q}[\ell(p,Y)]$, $\mathcal E_\ell(q):=L(q,q)$, and
$d_\ell(p,q):=L(p,q)-\mathcal E_\ell(q)$.

\begin{proposition}[Calibration error as proper regret]
\label{prop:calib-regret}
Let $\ell$ be a proper loss. For any score $S$ and $C=\mathbb P(Y\in\cdot\mid S)$,
\[
\mathbb E[\ell(S,Y)]
=
\mathbb E\big[d_\ell(S,C)\big]+\mathbb E\big[\mathcal E_\ell(C)\big].
\]
In particular, $\mathbb E[d_\ell(S,C)]\ge 0$, and if $\ell$ is strictly proper then $\mathbb E[d_\ell(S,C)]=0$ iff $S=C$ a.s.
Moreover, perfect calibration implies global balance: $\mathbb E[S]=\mathbb E[\onehot(Y)]$.
\end{proposition}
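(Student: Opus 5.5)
The identity is the special case of Theorem~\ref{thm:onelevel} with $\mathcal A=\mathcal H_S$ and $T=S$. Since $S$ is trivially $\sigma(S)$-measurable and $Q_{\mathcal H_S}=C$ by definition, the theorem gives
\[
\mathbb E[\ell(S,Y)]=\mathbb E[d_\ell(S,C)]+\mathbb E[\mathcal E_\ell(C)]
\]
directly. If one wants the argument self-contained, it is the tower property. Condition on $\sigma(S)$. Because $\mathcal Y$ is finite, $\ell(S,Y)=\sum_{y}\ell(S,y)\mathbf 1\{Y=y\}$. Each $\ell(S,y)$ is $\sigma(S)$-measurable, so
\[
\mathbb E[\ell(S,Y)\mid S]=\sum_y \ell(S,y)C_y=L(S,C).
\]
Then write $L(S,C)=d_\ell(S,C)+\mathcal E_\ell(C)$ and take expectations. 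The only care needed is integrability of $\ell(S,Y)$, which is assumed wherever $\mathbb E[\ell(S,Y)]$ is finite; with log-loss some terms may be $+\infty$, and the identity is read in $[0,\infty]$.

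Next, nonnegativity. Propriety means $L(p,q)\ge L(q,q)$ for all $p,q$, so $d_\ell(S,C)\ge0$ pointwise and hence $\mathbb E[d_\ell(S,C)]\ge0$. For the strictly proper case, $d_\ell(p,q)=0$ iff $p=q$. A nonnegative random variable has zero expectation iff it vanishes a.s., so $\mathbb E[d_\ell(S,C)]=0$ iff $d_\ell(S,C)=0$ a.s., which holds iff $S=C$ a.s. The one subtlety is measurability of $\{S\neq C\}$, which is fine because both are $\sigma(S)$-measurable random elements of the finite-dimensional simplex.

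Finally, global balance. If $C=S$ a.s., then componentwise
\[
\mathbb E[S_y]=\mathbb E[C_y]=\mathbb E\big[\mathbb P(Y=y\mid S)\big]=\mathbb P(Y=y)=\mathbb E[\onehot(Y)_y],
\]
by the tower property. This gives $\mathbb E[S]=\mathbb E[\onehot(Y)]$, and in the binary case $\mathbb E[S]=\mathbb E[Y]$.

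None of these steps is a real obstacle. The point requiring most attention is the rigorous justification of $\mathbb E[\ell(S,Y)\mid S]=L(S,C)$, namely pulling the $\sigma(S)$-measurable factors $\ell(S,y)$ out of the conditional expectation. Finiteness of $\mathcal Y$ makes this immediate, and it is already covered by invoking Theorem~\ref{thm:onelevel}.
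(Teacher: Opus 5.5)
Your proposal is correct and follows the paper's own proof: the identity is Theorem~\ref{thm:onelevel} with $\mathcal A=\sigma(S)$ and $T=S$, the zero-regret characterization comes from strict propriety, and global balance follows from $\mathbb E[\onehot(Y)\mid S]=S$ and the tower property. You also spell out details the paper leaves implicit: the computation $\mathbb E[\ell(S,Y)\mid S]=L(S,C)$, and the easy converse $S=C\Rightarrow d_\ell(S,C)=0$.
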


\subsection{Diagnostics and visualization}
\label{sec:calib-diagnostics}

Empirical calibration assessment amounts to estimating the map $s\mapsto \mathbb P(Y\in\cdot\mid S=s)$ (or its binary analogue).
Reliability diagrams (binning $S$ and plotting empirical frequencies) are intuitive but sensitive to bin choices.
Stable alternatives use a fixed estimation rule (e.g. isotonic regression in the binary case \citep{dimitriadis2021stable}), and can be combined with proper-score-based comparisons (e.g. Murphy-type diagrams and the ``triptych'' perspective) to separate reliability from refinement and overall performance.
In our notation, these diagnostics primarily target the reliability term $\mathbb E[d_\ell(S,C)]$ and help distinguish miscalibration from information loss (grouping) and irreducible uncertainty.

Given test data $(S_i,X_i,Y_i)_{i=1}^n$, we estimate $C(s)=P(Y\in\cdot\mid S=s)$ by a recalibration fit $\widehat C(s)=\widehat g(s)$ 
(estimated out-of-sample using an independent calibration split, or via cross-fitting). We then estimate reliability by
\[
\widehat{\mathrm{Rel}}_\ell:=\frac1n\sum_{i=1}^n d_\ell(S_i,\widehat C(S_i)).
\]
When a proxy $\widehat Q(X)\approx P(Y\in\cdot\mid X)$ is available (oracle in simulations, or a high-capacity reference model), we analogously estimate grouping and irreducible terms by
$\widehat{\mathrm{Grp}}_\ell:=\frac1n\sum_i d_\ell(\widehat C(S_i),\widehat Q(X_i))$ and
$\widehat{\mathrm{Irr}}_\ell:=\frac1n\sum_i \mathrm{E}_\ell(\widehat Q(X_i))$. \cite{LebelMV23} also propose estimating the lower bound of the grouping loss.


\section{Recalibration}\label{sec:recalibration}

A (post-hoc) recalibrator produces an updated score $\widetilde S$ that is measurable with respect to the original score information $\sigma(S)$.
Equivalently, it restricts attention to predictors of the form $\widetilde S=g(S)$ for a measurable map
$g:\Delta(\mathcal Y)\to\Delta(\mathcal Y)$, so that
\[
\widetilde S := g(S),\qquad \sigma(\widetilde S)\subseteq \sigma(S).
\]
In the binary case ($S\in[0,1]$), one often restricts to non-decreasing $g$ to preserve the ranking induced by $S$.

\paragraph{Practical note.}
In practice $C=\mathbb P(Y\in\cdot\mid S)$ is unknown and $g$ is learned on held-out calibration data (or via cross-fitting); fitting $g$ on the same data used to train the classifier can lead to overly optimistic calibration assessments.

\begin{proposition}[Population-optimal recalibration \citep{berta2025earlystopping}]
\label{prop:pop-opt-recal}
Assume $\ell$ is proper and consider the class of predictors that are $\sigma(S)$-measurable, i.e. of the form $\widetilde S=g(S)$.
Then the population risk is minimized by $\widetilde S^\star=C$:
\[
\inf_{g:\Delta(\mathcal Y)\to\Delta(\mathcal Y)} \big\lbrace\mathbb{E}[\ell(g(S),Y)] \big\rbrace =\mathbb{E}[\ell(C,Y)] = \mathbb{E}[\mathcal E_\ell(C)].
\]
Moreover, the excess risk of any recalibrator $g$ admits the decomposition
\[
\mathbb{E}[\ell(g(S),Y)]-\mathbb{E}[\ell(C,Y)] = \mathbb{E}\big[d_\ell(g(S),C)\big].
\]
\end{proposition}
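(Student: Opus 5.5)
The plan is to apply the one-level decomposition, Theorem~\ref{thm:onelevel}, with $\mathcal A=\mathcal H_S=\sigma(S)$ and $T=g(S)$, and then exploit properness. Since $g$ is measurable, $g(S)$ is $\sigma(S)$-measurable, and $Q_{\mathcal H_S}=C$. Theorem~\ref{thm:onelevel} then gives, for every measurable $g$,
\[
\mathbb E[\ell(g(S),Y)]=\mathbb E\big[d_\ell(g(S),C)\big]+\mathbb E\big[\mathcal E_\ell(C)\big].
\]

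The key steps, in order, are as follows.

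\textbf{Step 1.} Apply the identity to the particular choice $g^\star(s):=\mathbb P(Y\in\cdot\mid S=s)$. This is a version of the conditional law, so $C=g^\star(S)$ a.s. and $g^\star$ is admissible. Since $d_\ell(C,C)=L(C,C)-\mathcal E_\ell(C)=0$ pointwise, we obtain $\mathbb E[\ell(C,Y)]=\mathbb E[\mathcal E_\ell(C)]$.

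\textbf{Step 2.} Subtract this from the general identity to get the excess-risk formula
\[
\mathbb E[\ell(g(S),Y)]-\mathbb E[\ell(C,Y)]=\mathbb E[d_\ell(g(S),C)].
\]

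\textbf{Step 3.} By properness, $d_\ell(p,q)=L(p,q)-L(q,q)\ge 0$ for all $p,q$. Hence the excess risk is nonnegative for every $g$, and the infimum is attained at $g^\star$. This proves both displayed equalities in the statement.

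The main obstacle is technical rather than conceptual: one needs a measurable function $g^\star$ with $C=g^\star(S)$. This follows from the Doob--Dynkin lemma, since $C$ is $\sigma(S)$-measurable and takes values in $\Delta(\mathcal Y)$, a Borel subset of $\R^K$; because $\mathcal Y$ is finite, each coordinate can be handled separately. One also needs the expectations to be well defined, for instance in the log-loss case where $\ell$ may take the value $+\infty$. I would assume, as is implicit in Theorem~\ref{thm:onelevel}, that $\mathbb E[\mathcal E_\ell(C)]$ is finite. If $\mathbb E[\ell(g(S),Y)]=+\infty$ for some $g$, the inequality is then trivial, and the excess-risk identity should be read in $[0,\infty]$.
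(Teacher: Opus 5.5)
Your proposal is correct and follows the paper's proof. Like the paper, you apply Theorem~\ref{thm:onelevel} with $\mathcal A=\sigma(S)$ and $T=g(S)$, observe that $\mathbb E[\mathcal E_\ell(C)]$ does not depend on $g$ while the regret term is nonnegative by propriety and vanishes at $g(S)=C$, and use $d_\ell(C,C)=0$. Your use of Doob--Dynkin to write $C=g^\star(S)$, and your handling of infinite expectations under log-loss, are technical details the paper leaves implicit.
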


\paragraph{What recalibration can and cannot fix.}
By Proposition~\ref{prop:pop-opt-recal}, post-hoc recalibration can only improve the {\em reliability} term (it acts within the information level $\sigma(S)$).
It cannot, without changing the underlying score, directly reduce the {\em grouping} term $\mathbb E[d_\ell(C,Q)]$ in~\eqref{eq:clf-chain}, which quantifies information loss from $\bX$ to $S$.

\paragraph{Why require $g$ to be non-decreasing in the binary case?}
Beyond reliability, many downstream uses of a score rely on {\em ranking} rather than on the absolute probability values: thresholding rules, ROC/AUC, or top-$k$ selection.
Recalibration should ideally fix reliability {\em without harming discrimination}.
A minimal compatibility constraint is that $g:[0,1]\to[0,1]$ be non-decreasing.

\begin{proposition}[Monotone recalibration preserves threshold rules and ROC (up to ties)]
\label{prop:monotone-preserves-roc}
Let $Y\in\{0,1\}$, let $S\in[0,1]$, and let $g:[0,1]\to[0,1]$ be non-decreasing.
Define $\widetilde S=g(S)$.
For any threshold $\tau\in[0,1]$, there exists $\tau'\in[0,1]$ such that
\[
\{\widetilde S\ge \tau\}=\{S\ge \tau'\}\quad\text{a.s.}
\]
Consequently, all threshold classifiers induced by $\widetilde S$ are also induced by $S$, and the ROC curve (hence AUC) is unchanged by $g$ up to tie-handling conventions.
\end{proposition}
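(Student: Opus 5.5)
The plan is to prove the set identity first, using a generalized inverse of $g$, and then read off the ROC statement from it.

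\textbf{Threshold sets.} Fix $\tau\in[0,1]$ and let $U_\tau:=\{s\in[0,1]: g(s)\ge \tau\}$. Since $g$ is non-decreasing, $U_\tau$ is an up-set: if $s\in U_\tau$ and $s'\ge s$, then $g(s')\ge g(s)\ge\tau$. An up-set of $[0,1]$ is either empty, or an interval $[\tau',1]$, or an interval $(\tau',1]$, where $\tau':=\inf U_\tau$ (with $\tau':=1$ when $U_\tau$ is empty). Hence $\{\widetilde S\ge\tau\}=\{S\in U_\tau\}$, and this event differs from $\{S\ge\tau'\}$ at most by the event $\{S=\tau'\}$, plus the degenerate case $U_\tau=\emptyset$.

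\textbf{Removing the boundary discrepancy.} This is where the ``a.s.''\ and ``up to ties'' qualifications enter.
\begin{itemize}
\item If $\mathbb P(S=\tau')=0$, the two events agree almost surely.
\item If $S$ has an atom at $\tau'$ and $U_\tau=(\tau',1]$, then $\{\widetilde S\ge\tau\}=\{S>\tau'\}$. This is still a threshold rule of $S$, but with a strict inequality; it is the limit of the rules $\{S\ge\tau'+\varepsilon\}$ as $\varepsilon\downarrow 0$.
\item If $U_\tau=\emptyset$, choose a threshold strictly above the support of $S$ (e.g.\ $\{S>1\}$), so the event is empty.
\end{itemize}
I expect this case analysis to be the main obstacle. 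It requires stating honestly that the claimed equality $\{S\ge\tau'\}$ holds exactly only up to the choice of $\ge$ versus $>$ at an atom. I would formalize the conclusion as: every set $\{\widetilde S\ge\tau\}$ is of the form $\{S\ge\tau'\}$ or $\{S>\tau'\}$. These are exactly the threshold classifiers of $S$ under either tie convention.

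\textbf{ROC curve.} The ROC curve of a score consists of the points $(\mathbb P(\text{rule}=1\mid Y=0),\,\mathbb P(\text{rule}=1\mid Y=1))$ over all threshold rules, with interpolation across atoms under the usual convention. By the previous step, every threshold rule of $\widetilde S$ is a threshold rule of $S$. So the ROC points of $\widetilde S$ form a subset of those of $S$.

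The converse inclusion fails when $g$ merges distinct values of $S$, i.e.\ when $g$ is flat on an interval. In that case $\widetilde S$ creates ties, and the ROC curve of $\widetilde S$ replaces the corresponding concave portion of the curve of $S$ by its linear interpolation. This is exactly what ``up to tie-handling conventions'' refers to.

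If $g$ is strictly increasing on the support of $S$, the map $\tau\mapsto\tau'$ is onto the thresholds of $S$, and the curves coincide exactly. AUC then equals $\mathbb P(S_1>S_0)+\tfrac12\mathbb P(S_1=S_0)$ for independent $S_1\sim S\mid Y=1$ and $S_0\sim S\mid Y=0$. A strictly increasing $g$ preserves both the event $\{S_1>S_0\}$ and the event $\{S_1=S_0\}$, so the AUC is unchanged.
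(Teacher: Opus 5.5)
Your proof is correct and follows the same route as the paper: take the up-set $\{s: g(s)\ge\tau\}$, let $\tau'$ be its infimum, and read off the ROC claim from the resulting inclusion of threshold rules. Your case analysis is more careful than the paper's. The paper asserts the up-set is always $[t,1]$ and blames any discrepancy on flat regions of $g$. As you note, the up-set can be half-open or empty, so the identity with $\{S\ge\tau'\}$ can fail on an atom $\{S=\tau'\}$ with $g(\tau')<\tau$, and the conclusion is properly $\{S\ge\tau'\}$ or $\{S>\tau'\}$. Flat regions of $g$ explain something else: why the converse ROC inclusion fails.
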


This motivates isotonic regression and monotone smoothers as {\em order-consistent} calibrators: monotonicity preserves the weak order of $S$, while strict monotonicity on the support of $S$ additionally avoids large flat regions.

\paragraph{Finite-sample validity (context).}
Recalibration is typically justified as risk minimization under a proper loss (hence asymptotic/model-based guarantees).
A complementary line of work provides {\em finite-sample} validity under minimal assumptions, e.g.  conformal prediction for set-valued outputs and Venn--Abers predictors for interval-valued probabilities \citep{vovk2005algorithmic,vovk2012vennabers}.
These methods highlight that formal guarantees often attach to the full object produced (set/interval), and may not be preserved by collapsing it to a single probability.
This reinforces a recurring theme of our analysis: calibration is a conditional property and can be fragile under post-processing and aggregation.


\section{Calibration and Ensemble Learning}
\label{sec:calib-ensembles}

\subsection{Aggregation: calibrated inputs do not imply a calibrated output}
\label{sec:aggregation}

A common practice is to combine several probabilistic classifiers $S^{(1)},\dots,S^{(M)}\in\Delta(\mathcal Y)$ into an aggregated score $S^{\mathrm{ens}} = A(\mathbf S)$ where $\mathbf S:=(S^{(1)},\dots,S^{(M)})$ and $A$ is, e.g., averaging, stacking, or a learned combiner.
Even if each component is perfectly calibrated, calibration is generally {\em not} preserved by aggregation (see Proposition~\ref{prop:agg-preserve}, Figure~\ref{fig:averaging} and \cite{gneiting2013}).

Building on the chain decomposition from Theorem~\ref{thm:chain}, to understand when calibration can be preserved, define the conditional laws
\[
C_{\mathbf S}:=\mathbb P(Y\in\cdot\mid \mathbf S),
\qquad
C_{\mathrm{ens}}:=\mathbb P(Y\in\cdot\mid S^{\mathrm{ens}}).
\]
By definition, $S^{\mathrm{ens}}$ is calibrated if and only if $C_{\mathrm{ens}}=S^{\mathrm{ens}}$ almost surely.

\begin{proposition}[When does an aggregator preserve calibration?]
\label{prop:agg-preserve}
Let $S^{\mathrm{ens}}=A(\mathbf S)$ and define $C_{\mathbf S}$ and $C_{\mathrm{ens}}$ as above.
If there exists a measurable map $\varphi:\Delta(\mathcal Y)\to\Delta(\mathcal Y)$ s.t.
\[
C_{\mathbf S}=\varphi(S^{\mathrm{ens}})\quad\text{a.s.},
\]
then $C_{\mathrm{ens}}=\varphi(S^{\mathrm{ens}})$ a.s. and $S^{\mathrm{ens}}$ is calibrated if and only if $\varphi(s)=s$ for $s$ in the (essential) support of $S^{\mathrm{ens}}$.
\end{proposition}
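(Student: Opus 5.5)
The argument is the tower property for conditional expectations applied to the nested $\sigma$-algebras $\sigma(S^{\mathrm{ens}})\subseteq\sigma(\mathbf S)$. Since $S^{\mathrm{ens}}=A(\mathbf S)$ with $A$ measurable, this inclusion holds, and we can write
\[
C_{\mathrm{ens}}=\mathbb P(Y\in\cdot\mid S^{\mathrm{ens}})=\mathbb E\big[\mathbb P(Y\in\cdot\mid \mathbf S)\,\big|\,S^{\mathrm{ens}}\big]=\mathbb E\big[C_{\mathbf S}\mid S^{\mathrm{ens}}\big].
\]
Because $\mathcal Y$ is finite, this identity can be read coordinatewise for each $y\in\mathcal Y$, using $\mathbb E[\mathbf 1\{Y=y\}\mid\cdot]$. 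This avoids any need for regular conditional distributions beyond the finite-dimensional simplex.

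Next I will substitute the hypothesis $C_{\mathbf S}=\varphi(S^{\mathrm{ens}})$ a.s. Since $\varphi$ is measurable, $\varphi(S^{\mathrm{ens}})$ is $\sigma(S^{\mathrm{ens}})$-measurable. It is also bounded, taking values in $\Delta(\mathcal Y)$. Hence
\[
\mathbb E[\varphi(S^{\mathrm{ens}})\mid S^{\mathrm{ens}}]=\varphi(S^{\mathrm{ens}})\quad\text{a.s.},
\]
which gives the first claim $C_{\mathrm{ens}}=\varphi(S^{\mathrm{ens}})$ a.s. Informally, the ensemble score retains all the information about $Y$ that $\mathbf S$ carries, so the grouping term between these two levels in Theorem~\ref{thm:chain} vanishes.

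For the equivalence, perfect calibration of $S^{\mathrm{ens}}$ (Definition~\ref{def:perfect-calibration}) means $C_{\mathrm{ens}}=S^{\mathrm{ens}}$ a.s. By the first claim, this is the same as $\varphi(S^{\mathrm{ens}})=S^{\mathrm{ens}}$ a.s. The latter says that the set $N=\{s:\varphi(s)\neq s\}$ has $\mathbb P(S^{\mathrm{ens}}\in N)=0$. Here $N$ is measurable because $\varphi$ and the identity are measurable maps into a finite-dimensional space.

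The only delicate point is pinning down what "$\varphi(s)=s$ on the essential support" means. I will read it as "$\varphi=\mathrm{id}$ except on a null set of the law of $S^{\mathrm{ens}}$", so that it coincides with $\mathbb P(S^{\mathrm{ens}}\in N)=0$ and both directions are immediate. If one instead insists on pointwise equality on the topological support, that requires continuity of $\varphi$. I will therefore state the result in the law-almost-everywhere sense and note this in a remark.
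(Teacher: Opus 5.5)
Your proof is correct and follows the route the paper implicitly relies on. The paper gives no separate proof, but your argument is exactly the tower identity $Q_{\mathcal A}=\mathbb E[Q_{\mathcal B}\mid\mathcal A]$ for $\mathcal A\subseteq\mathcal B$, recorded at the end of Appendix~\ref{app:condlaw}, applied with $\sigma(S^{\mathrm{ens}})\subseteq\sigma(\mathbf S)$. Your law-a.e.\ reading of ``essential support'' is the right one: the hypothesis determines $\varphi$ only up to $\mathbb P(S^{\mathrm{ens}}\in\cdot)$-null sets, so a pointwise condition on the topological support could fail for a particular admissible version of $\varphi$ even when $S^{\mathrm{ens}}$ is calibrated.
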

In the binary case, consider the average $\bar S=\frac1M\sum_{m=1}^M S^{(m)}$ of $M$ individually calibrated scores.
Then $\bar S$ is calibrated only if $\mathbb P(Y=1\mid S^{(1)},\ldots,S^{(M)})$ depends on $(S^{(1)},\ldots,S^{(M)})$ {\em only through} $\bar S$ (i.e., $\bar S$ is sufficient for $Y$ under the projection $(S^{(1)},\ldots,S^{(M)})\mapsto \bar S$).
Otherwise, averaging can destroy calibration even when each component is perfectly calibrated (Example~\ref{ex:avg-breaks-calibration}).

\begin{example}[Two calibrated scores whose average is miscalibrated]
\label{ex:avg-breaks-calibration}
Let $Y\in\{0,1\}$ and let $(S^{(1)},S^{(2)})\in\{0.25,0.75\}^2$ take the four values $(0.25,0.25),(0.25,0.75),(0.75,0.25),(0.75,0.75)$ each with probability $1/4$.
Conditionally on $(S^{(1)},S^{(2)})$, let $Y\sim\mathrm{Bern}(m)$ with
\[
\begin{cases}
    m(0.25,0.25)=0,\quad &m(0.25,0.75)=0.5,\\
    m(0.75,0.25)=0.5,\quad &m(0.75,0.75)=1.
\end{cases}
\]
Then one checks that $\mathbb E[Y\mid S^{(1)}]=S^{(1)}$ and $\mathbb E[Y\mid S^{(2)}]=S^{(2)}$, so each component is perfectly calibrated.

However, for the average $\bar S:=\tfrac12(S^{(1)}+S^{(2)})$, we have $\mathbb P(\bar S=0.25)=1/4$ and $\mathbb E[Y\mid \bar S=0.25]=0\neq 0.25$, so $\bar S$ is not calibrated. This illustrates that a projection $S\mapsto A(S)$ can destroy calibration even when each coordinate is calibrated.
\end{example}

If one keeps the full vector $\mathbf S$, then the Bayes-optimal calibrated prediction at that information level is $C_{\mathbf S}=\mathbb P(Y\in\cdot\mid \mathbf S)$ (a ``stacking'' view).
If instead one outputs only $S^{\mathrm{ens}}$, then post-hoc recalibration targets $C_{\mathrm{ens}}=\mathbb P(Y\in\cdot\mid S^{\mathrm{ens}})$ but cannot recover information lost by the projection
$\mathbf S\mapsto S^{\mathrm{ens}}$.

\subsection{Sequential ensembles and boosting: refinement as increasing information}
\label{sec:boosting}

Boosting builds predictors sequentially by adding weak learners, denoted $h_t(\bX)$ at step $t$.
A convenient abstraction focuses on the {\em information revealed across stages}: let
\[
\mathcal{F}_t := \sigma\big(h_1(\bX),\dots,h_t(\bX)\big),
\]
with $\mathcal{F}_0\subseteq\mathcal{F}_1\subseteq\cdots\subseteq\mathcal{F}_T\subseteq\mathcal H_{\bX}$,
and define the Bayes-optimal prediction at level $\mathcal F_t$ by
\[
Q_t := \mathbb P(Y\in\cdot\mid \mathcal F_t)\in\Delta(\mathcal Y).
\]
Each $Q_t$ is perfectly calibrated (by construction) and becomes more refined as $t$ increases.

\begin{theorem}[Telescoping decomposition along a filtration (proper losses)]
\label{thm:boosting-telescope}
Let $\ell$ be a proper loss with generalized entropy $\mathcal E_\ell$ and induced divergence $d_\ell$.
Let $\mathcal F_0\subseteq \cdots \subseteq \mathcal F_T$ be a filtration and define $Q_t:=\mathbb P(Y\in\cdot\mid \mathcal F_t)$.
Then for any $\mathcal F_0$-measurable predictor $T:\Omega\to\Delta(\mathcal Y)$,
\[
\mathbb E[\ell(T,Y)]
=
\mathbb E[d_\ell(T,Q_0)]
+\sum_{t=0}^{T-1}\mathbb E[d_\ell(Q_t,Q_{t+1})]
+\mathbb E[\mathcal E_\ell(Q_T)].
\]
\end{theorem}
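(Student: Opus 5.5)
The plan is to prove the telescoping identity by induction on $T$, using Theorem~\ref{thm:onelevel} for the base case and Theorem~\ref{thm:chain} as the engine of the inductive step. As a cleaner alternative, I will also note a direct argument from a single pointwise identity.

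\textbf{Direct route.} The key fact is the one-step identity
\[
\mathbb E[\mathcal E_\ell(Q_t)]=\mathbb E[d_\ell(Q_t,Q_{t+1})]+\mathbb E[\mathcal E_\ell(Q_{t+1})],\qquad t=0,\dots,T-1.
\]
It follows from Theorem~\ref{thm:onelevel}, applied with $\mathcal A=\mathcal F_{t+1}$ and the $\mathcal F_{t+1}$-measurable predictor $Q_t$, which is legitimate because $\mathcal F_t\subseteq\mathcal F_{t+1}$. That application gives $\mathbb E[\ell(Q_t,Y)]=\mathbb E[d_\ell(Q_t,Q_{t+1})]+\mathbb E[\mathcal E_\ell(Q_{t+1})]$.

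It then applies again, with $\mathcal A=\mathcal F_t$ and $T=Q_t$. The divergence term vanishes because $d_\ell(q,q)=0$, which gives $\mathbb E[\ell(Q_t,Y)]=\mathbb E[\mathcal E_\ell(Q_t)]$.

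Equating the two expressions yields the one-step identity. Starting from Theorem~\ref{thm:onelevel} at level $\mathcal F_0$,
\[
\mathbb E[\ell(T,Y)]=\mathbb E[d_\ell(T,Q_0)]+\mathbb E[\mathcal E_\ell(Q_0)],
\]
we substitute the one-step identity successively for $t=0,1,\dots,T-1$. The sum telescopes to the claimed expression, ending with $\mathbb E[\mathcal E_\ell(Q_T)]$.

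\textbf{Inductive phrasing.} Equivalently, the case $T=1$ is exactly Theorem~\ref{thm:chain} with $\mathcal A=\mathcal F_0$ and $\mathcal B=\mathcal F_1$. For the step from $T$ to $T+1$, we apply Theorem~\ref{thm:chain} to the pair $\mathcal F_T\subseteq\mathcal F_{T+1}$ with predictor $Q_T$. Since $d_\ell(Q_T,Q_T)=0$, this rewrites the last term $\mathbb E[\mathcal E_\ell(Q_T)]$ as $\mathbb E[d_\ell(Q_T,Q_{T+1})]+\mathbb E[\mathcal E_\ell(Q_{T+1})]$.

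\textbf{Technical point.} The only delicate issue is integrability. We must ensure every expectation is well-defined and finite so that terms can be moved across the equalities; the log-loss is the concern here, since $\ell$ may be infinite. I would handle this exactly as in Theorems~\ref{thm:onelevel} and~\ref{thm:chain}. Either assume $\mathbb E[\ell(T,Y)]<\infty$, in which case all terms are nonnegative or finite, or note that for proper losses each $d_\ell\ge0$ and each $\mathcal E_\ell$ is bounded on the finite simplex for Brier-type losses. For log-loss, $\mathcal E_\ell=H\le\log|\mathcal Y|$ is bounded, so the entropy terms are finite and the identity holds in $[0,\infty]$. Apart from this, the proof is pure bookkeeping.
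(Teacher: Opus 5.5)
Your proof is correct and is essentially the paper's argument. The paper applies Theorem~\ref{thm:chain} to $(\mathcal F_0,\mathcal F_1)$, rewrites $\mathbb E[\mathcal E_\ell(Q_1)]=\mathbb E[\ell(Q_1,Y)]$, reapplies the same identity along $(\mathcal F_1,\mathcal F_2)$ and iterates so the divergence terms telescope, which is exactly your one-step identity used successively; your integrability remark is extra care the paper omits and does not change the argument.
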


\begin{corollary}[Log-loss along a filtration: information gains]
\label{cor:filtration-logloss-info}
Under log-loss, increments in Theorem~\ref{thm:boosting-telescope} satisfy
\[
\mathbb E[d_{\log}(Q_t,Q_{t+1})]
=
\mathbb E[\mathrm{KL}(Q_{t+1}\|Q_t)]
=
I(Y;\mathcal F_{t+1}\mid \mathcal F_t),
\]
and therefore $\mathbb E[\ell_{\log}(T,Y)]$ equals
\[
\mathbb E[\mathrm{KL}(Q_0\|T)]
+
\sum_{t=0}^{T-1} I(Y;\mathcal F_{t+1}\mid \mathcal F_t)
+
H(Y\mid \mathcal F_T).
\]
\end{corollary}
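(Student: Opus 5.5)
The corollary has two parts: an identity for each increment, and the summed decomposition. The summed form follows once the increments are identified, so the plan is to prove the increment identity and then substitute into Theorem~\ref{thm:boosting-telescope}.

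\textbf{Identifying the divergence and entropy.} First I use the identifications already recorded in Corollary~\ref{cor:logloss-info}. For $\ell_{\log}(p,y)=-\log p_y$ we have $L(p,q)=-\sum_y q_y\log p_y$. Hence $\mathcal E_{\log}(q)=H(q)$ and
\[
d_{\log}(p,q)=\sum_y q_y\log(q_y/p_y)=\mathrm{KL}(q\|p).
\]
Taking $p=Q_t$ and $q=Q_{t+1}$ gives the first equality $\mathbb E[d_{\log}(Q_t,Q_{t+1})]=\mathbb E[\mathrm{KL}(Q_{t+1}\|Q_t)]$.

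\textbf{The information-gain identity.} For the second equality I apply the last claim of Corollary~\ref{cor:logloss-info} with $\mathcal A=\mathcal F_t\subseteq\mathcal B=\mathcal F_{t+1}$. This yields $\mathbb E[\mathrm{KL}(Q_{\mathcal B}\|Q_{\mathcal A})]=I(Y;\mathcal B\mid\mathcal A)$.

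If a self-contained argument is preferred, I would write
\[
\mathrm{KL}(Q_{t+1}\|Q_t)=\textstyle\sum_y Q_{t+1,y}\log Q_{t+1,y}-\sum_y Q_{t+1,y}\log Q_{t,y}.
\]
The expectation of the first sum is $-H(Y\mid\mathcal F_{t+1})$. For the second sum, $\log Q_{t,y}$ is $\mathcal F_t$-measurable and $\mathbb E[Q_{t+1}\mid\mathcal F_t]=Q_t$ by the tower property. Its expectation is therefore $-\mathbb E[\sum_y Q_{t,y}\log Q_{t,y}]=H(Y\mid\mathcal F_t)$. The difference is $H(Y\mid\mathcal F_t)-H(Y\mid\mathcal F_{t+1})$, which is the definition of $I(Y;\mathcal F_{t+1}\mid\mathcal F_t)$ for nested $\sigma$-algebras.

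\textbf{Main obstacle: integrability.} The only delicate point is that the terms $\log Q_{t,y}$ may equal $-\infty$, so the expectations need justification. I would handle this with the convention $0\log 0=0$. The key observation is that $Q_{t,y}=0$ implies $Q_{t+1,y}=0$ a.s. on that set, because $\mathbb E[Q_{t+1,y}\mathbf 1\{Q_{t,y}=0\}]=\mathbb E[Q_{t,y}\mathbf 1\{Q_{t,y}=0\}]=0$ and $Q_{t+1,y}\ge 0$. Boundedness of entropy on a finite label set then makes every term finite.

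\textbf{Assembling the decomposition.} Finally I substitute into Theorem~\ref{thm:boosting-telescope}:
\begin{itemize}
\item $\mathbb E[d_{\log}(T,Q_0)]=\mathbb E[\mathrm{KL}(Q_0\|T)]$;
\item each increment becomes $I(Y;\mathcal F_{t+1}\mid\mathcal F_t)$;
\item $\mathbb E[\mathcal E_{\log}(Q_T)]=\mathbb E[H(Q_T)]=H(Y\mid\mathcal F_T)$.
\end{itemize}
These give the stated expression for $\mathbb E[\ell_{\log}(T,Y)]$.
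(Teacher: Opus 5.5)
Your proposal is correct and follows the paper's route: apply Corollary~\ref{cor:logloss-info} to each nested pair $\mathcal F_t\subseteq\mathcal F_{t+1}$, then substitute the resulting identifications into Theorem~\ref{thm:boosting-telescope}. Your self-contained computation of $\mathbb E[\mathrm{KL}(Q_{t+1}\|Q_t)]=H(Y\mid\mathcal F_t)-H(Y\mid\mathcal F_{t+1})$ via $\mathbb E[Q_{t+1}\mid\mathcal F_t]=Q_t$, together with your null-set argument for the $\log 0$ terms, fills in a step that the paper handles only by appeal to the definition of conditional mutual information.
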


\begin{corollary}[Brier case: Pythagoras identity]
\label{cor:boosting-brier}
Assume $\mathcal Y=\{0,1\}$ and $\ell(p,y)=(p-y)^2$.
Let $\eta_t:=\mathbb E[Y\mid\mathcal F_t]\in[0,1]$. Then
\[
\mathbb E[(Y-\eta_{t+1})^2]
=
\mathbb E[(Y-\eta_t)^2]
-\mathbb E[(\eta_{t+1}-\eta_t)^2],
\]
so the Brier risk decreases with $t$ and the decrease equals the refinement gain $\mathbb E[(\eta_{t+1}-\eta_t)^2]$.
\end{corollary}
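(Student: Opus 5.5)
The plan is to read the Brier identity off the telescoping decomposition and then verify it directly by orthogonality. In the binary Brier case we identify $p\in\Delta(\mathcal Y)$ with $p=\mathbb P(Y=1)$. Then $L(p,q)=q(1-p)^2+(1-q)p^2=(p-q)^2+q(1-q)$, so $d_\ell(p,q)=(p-q)^2$ and $\mathcal E_\ell(q)=q(1-q)$. We apply Theorem~\ref{thm:chain} with $\mathcal A=\mathcal F_t\subseteq\mathcal B=\mathcal F_{t+1}$ and the $\mathcal F_t$-measurable predictor $T=\eta_t$. Since $Q_{\mathcal A}=\eta_t$, the regret term vanishes, and we obtain
\[
\mathbb E[(\eta_t-Y)^2]=\mathbb E[(\eta_t-\eta_{t+1})^2]+\mathbb E[\eta_{t+1}(1-\eta_{t+1})].
\]
Applying Theorem~\ref{thm:onelevel} at level $\mathcal F_{t+1}$ with $T=\eta_{t+1}$ gives $\mathbb E[(\eta_{t+1}-Y)^2]=\mathbb E[\eta_{t+1}(1-\eta_{t+1})]$. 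Substituting this into the previous display gives the claimed identity.

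As a self-contained cross-check that does not pass through Theorem~\ref{thm:chain}, we expand
\[
Y-\eta_t=(Y-\eta_{t+1})+(\eta_{t+1}-\eta_t)
\]
and square. The cross term is $\mathbb E[(Y-\eta_{t+1})(\eta_{t+1}-\eta_t)]$. We condition on $\mathcal F_{t+1}$: the factor $\eta_{t+1}-\eta_t$ is $\mathcal F_{t+1}$-measurable because $\mathcal F_t\subseteq\mathcal F_{t+1}$, and it is bounded. Also $\mathbb E[Y-\eta_{t+1}\mid\mathcal F_{t+1}]=0$ by definition of $\eta_{t+1}$. Hence the cross term is zero, and we get the Pythagoras identity.

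The monotonicity statement follows at once, since $\mathbb E[(\eta_{t+1}-\eta_t)^2]\ge 0$.

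The only step needing care is the vanishing of the cross term. For it we need $\eta_t$ to be $\mathcal F_{t+1}$-measurable, which is where nestedness of the filtration is used. We also need integrability, which is automatic here because $Y,\eta_t\in[0,1]$. Everything else is routine algebra.
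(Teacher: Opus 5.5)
Your proposal is correct. The orthogonality cross-check is exactly the paper's proof: you expand $Y-\eta_t=(Y-\eta_{t+1})+(\eta_{t+1}-\eta_t)$ and kill the cross term by conditioning on $\mathcal F_{t+1}$. Your first route, via Theorem~\ref{thm:chain} with $T=\eta_t$ and the Brier forms $d_\ell(p,q)=(p-q)^2$, $\mathcal E_\ell(q)=q(1-q)$, is also valid and makes explicit that the corollary is the one-step instance of the telescoping decomposition.
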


\paragraph{Martingale view (Brier).}
The process $(\eta_t)_{t=0}^T$ is a bounded martingale with respect to $(\mathcal F_t)$ since
\[
\mathbb E[\eta_{t+1}\mid \mathcal F_t]
=
\mathbb E\big[\mathbb E[Y\mid \mathcal F_{t+1}]\mid \mathcal F_t\big]
=
\mathbb E[Y\mid \mathcal F_t]
=\eta_t.
\]
Corollary~\ref{cor:boosting-brier} is therefore a Doob--Pythagoras identity: each refinement step decomposes the mean squared error into a remaining error plus an orthogonal increment, and the gain $\mathbb E[(\eta_{t+1}-\eta_t)^2]$ measures how much additional information $\mathcal F_{t+1}$ brings beyond $\mathcal F_t$.
Equivalently,
\[
\mathbb E[(Y-\eta_t)^2]=\mathbb E[\mathrm{Var}(Y\mid \mathcal F_t)]
\]
and
\[
\mathbb E[(\eta_{t+1}-\eta_t)^2]=\mathbb E[\mathrm{Var}(Y\mid \mathcal F_t)-\mathrm{Var}(Y\mid \mathcal F_{t+1})],
\]
so refinement reduces conditional variance in expectation.

In practice, boosting does not produce the Bayes sequence $(Q_t)$: updates are constrained (e.g., additive logits, shallow trees) and optimization is approximate. Hence, even if weak learners are calibrated in isolation, the final predictor need not be calibrated.
A robust practice is to recalibrate {\em after} boosting, or to use calibration-aware objectives (cf.\ the refine--then--calibrate discussion in~\citep{berta2025earlystopping}).

\paragraph{Stagewise recalibration (population view).}
Let $\widetilde S_t\in\Delta(\mathcal Y)$ be the (possibly miscalibrated) score at stage $t$ and define its population recalibration
\[
S_t := \mathbb P(Y\in\cdot\mid \widetilde S_t).
\]
Then $S_t$ is perfectly calibrated with respect to its own output and (strictly) improves any strictly proper loss:
\[
\mathbb E[\ell(\widetilde S_t,Y)]
=
\mathbb E\big[d_\ell(\widetilde S_t,S_t)\big]
+\mathbb E[\ell(S_t,Y)].
\]
{\em Crucially}, the projection $\widetilde S_t\mapsto S_t$ can reduce miscalibration but may also reduce resolution if it collapses distinct situations to the same score value; therefore, calibrating at each stage is not automatically beneficial for discrimination, and a final recalibration step is often still required.


\section{Empirical illustrations}
\label{sec:experiments}

Our goal is not to optimize predictive performance, but to illustrate how the proposed decompositions separate miscalibration (reliability) from information loss (grouping) and residual uncertainty, and how this separation guides practical interventions such as recalibration and ensembling.

\subsection{Synthetic Data}



We use a binary data-generating process with two correlated covariates and known conditional probability $Q(\bX)=\mathbb P(Y=1\mid \bX)$; a correlation parameter $\rho$ controls dependence
($\bX$ is generated from a Gaussian copula with correlation $\rho$ and uniform marginals).
Full details and the true probability surface are given in Appendix~\ref{app:dgp}.

\paragraph{Base models and aggregation.}
On a training split, we fit two misspecified logistic models
\[
\widehat s_1:=\mathbb P_{\hat\theta_1}(Y=1\mid X_1),\text{ and }
\widehat s_2:=\mathbb P_{\hat\theta_2}(Y=1\mid X_2),
\]
and form their average $\widehat s_{\mathrm{ens}}:=\tfrac12(\widehat s_1+\widehat s_2)$.
On a separate calibration split, we learn monotone smooth recalibration maps $\widehat g_1,\widehat g_2,\widehat g_{\mathrm{ens}}:[0,1]\to[0,1]$ and define the recalibrated scores $S^{(1)}=\widehat g_1(\widehat s_1)$, $S^{(2)}=\widehat g_2(\widehat s_2)$ and $S^{(\mathrm{ens})}=\widehat g_{\mathrm{ens}}(\widehat s_{\mathrm{ens}})$, see Appendix~\ref{app:g:smooth}.

\paragraph{Local calibration score (LCS).}
To quantify calibration we use the local calibration score
\[
\mathrm{LCS}(S;\widehat g)
:=\frac{1}{n}\sum_{i=1}^n\Big(S_i-\widehat g(S_i)\Big)^2,
\]
calculated using local regression as in \cite{Austin2019TheIC}, on an independent test split.
Intuitively, $\mathrm{LCS}$ measures the squared distance between the score and its estimated calibrated version, and serves as a scalar summary complementary to calibration curves. For a proper loss $\ell$, the natural reliability estimate is $\widehat{\E}[d_\ell(S,\widehat C(S))]$; we use LCS only as a compact proxy.

\paragraph{Result.}
Figure~\ref{fig:averaging} shows that the two base models can be nearly calibrated (small LCS), while their average exhibits substantially larger miscalibration, especially as $\rho$ varies.
This empirically illustrates that calibration is not generally preserved under aggregation, even when each component
is (approximately) calibrated.

\begin{figure}[!htb]
    \centering
    \includegraphics[width=0.49\linewidth]{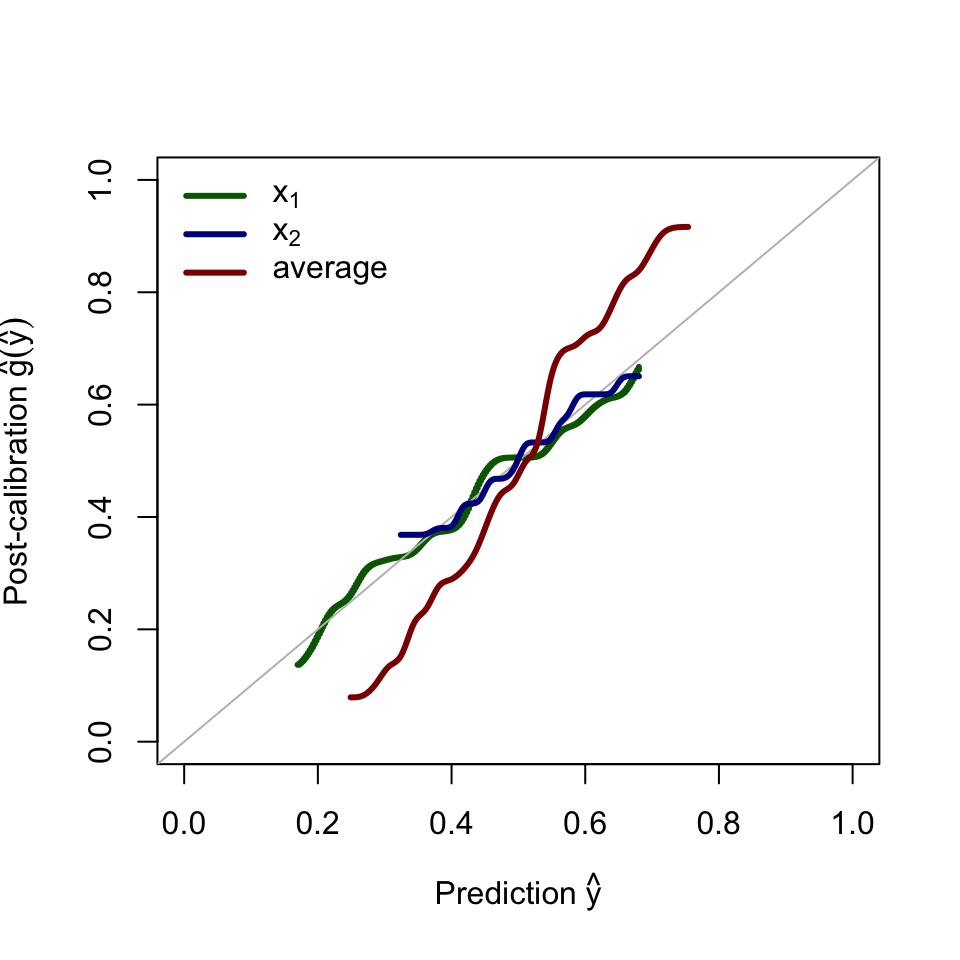}\hfill
    \includegraphics[width=0.49\linewidth]{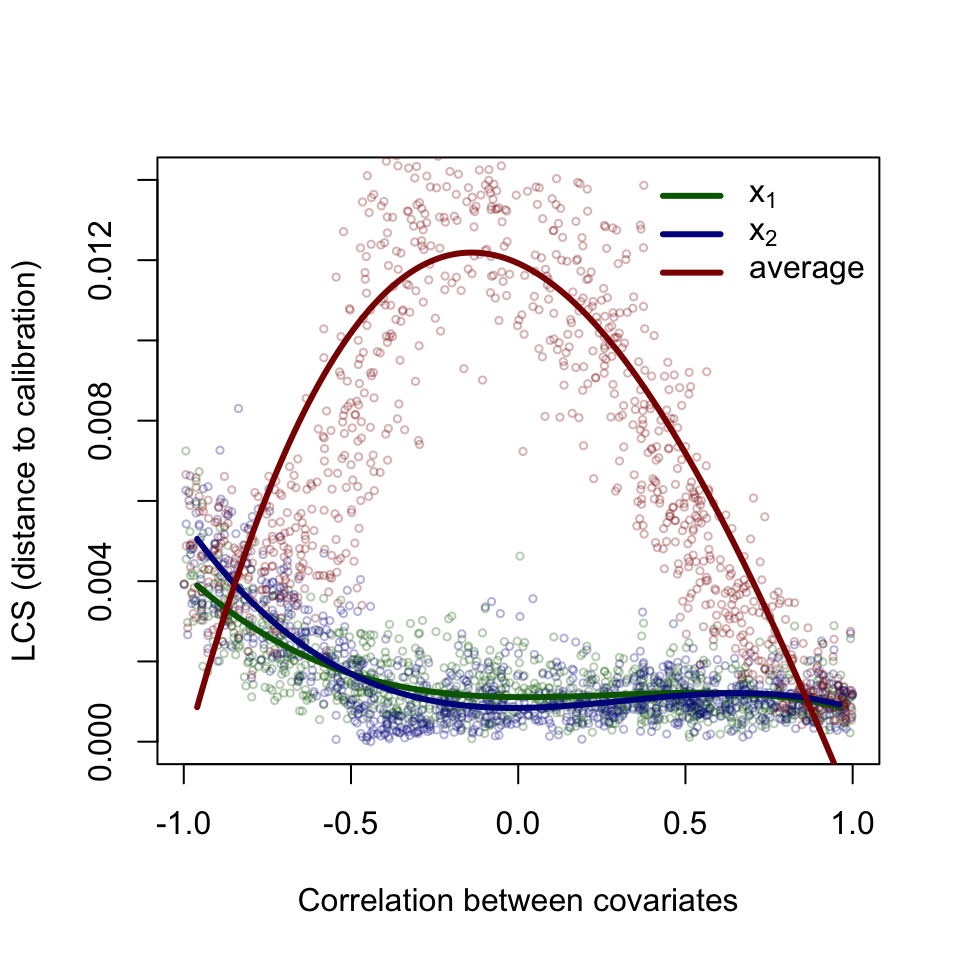}
    \caption{\textbf{Left:} learned monotone calibration maps $\widehat g(\cdot)$ for the two base models ($\widehat s_1$ in green, $\widehat s_2$ in blue) and for their average $\widehat s_{\mathrm{ens}}$ (red), shown here for $\rho=0$. The grey diagonal corresponds to perfect calibration.
    \textbf{Right:} local calibration score as a function of $\rho$ for the two base models (green/blue) and their average (red), computed on an independent test split.}
    \label{fig:averaging}
\end{figure}

\paragraph{Decomposition.} Figure~\ref{fig:brier-reducible} reports only the {\em reducible} part of the Brier decomposition, i.e.\ reliability $\mathbb E[(S-C)^2]$ and grouping $\mathbb E[(C-Q)^2]$ (the irreducible term $\mathbb E[Q(1-Q)]$ is omitted for readability).
Across all three scores, post-hoc recalibration drives the reliability component close to zero, while the grouping component remains essentially unchanged.
This confirms the interpretation of recalibration as a projection at information level $\sigma(S)$: it can remove miscalibration but cannot recover information lost in the compression $\bX\mapsto S$.
Moreover, using a richer score ($x12$) substantially reduces grouping, whereas quantization increases it, showing that calibration alone does not prevent large information loss.

\begin{figure}[!htb]
\centering
\includegraphics[width=\linewidth]{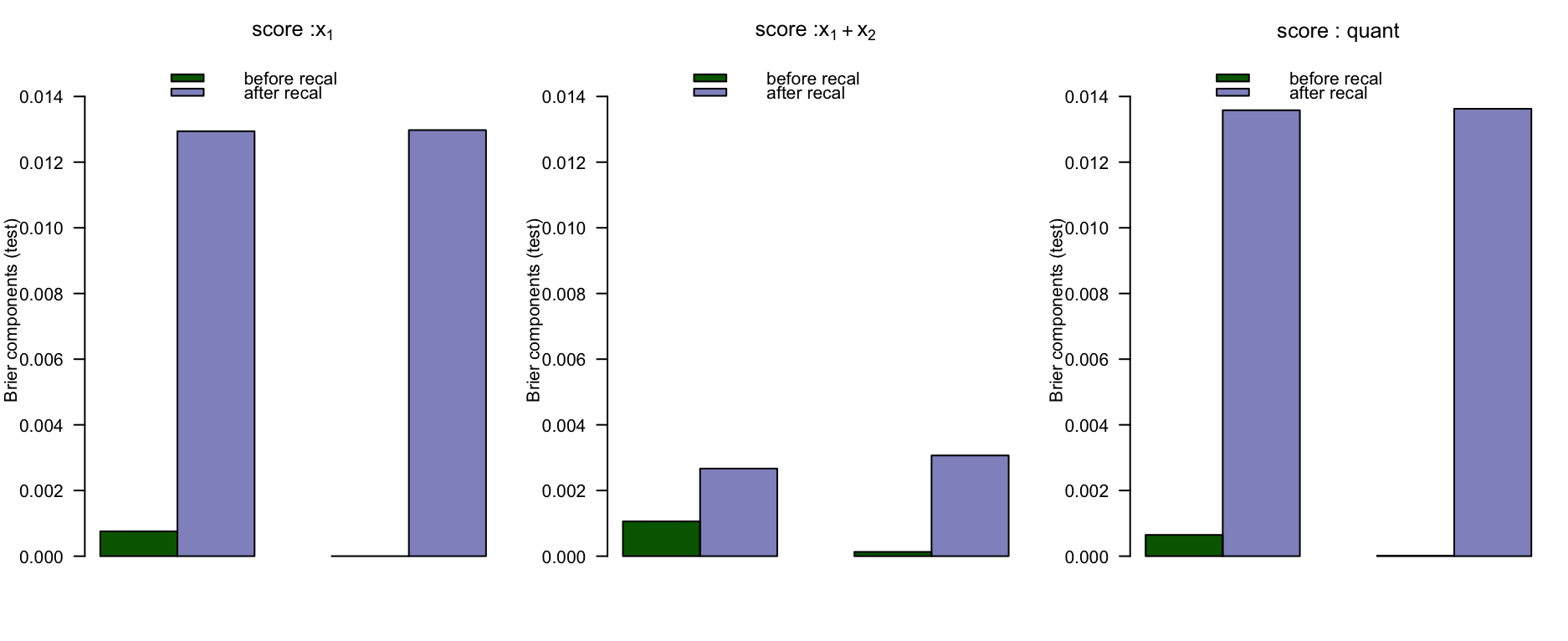}
\caption{\textbf{Reducible Brier components on test data.}
Bars show reliability $\widehat{\E}[(S-\widehat C)^2]$ and grouping $\widehat{\E}[(\widehat C-Q)^2]$ before and after monotone recalibration, for three scores: a logistic model on $X_1$ (left), a logistic model on $(X_1,X_2)$ (middle), and a quantized score (right).
Recalibration eliminates reliability but leaves grouping essentially unchanged.}
\label{fig:brier-reducible}
\end{figure}

\subsection{Real-data case study: GermanCredit}
\label{sec:experiments-real}

We complement the synthetic experiments with a real-data case study on \textsc{GermanCredit} \citep{hofmann1994statlog_german_credit}.
We consider two probabilistic classifiers trained on an independent training split:
(i) a logistic regression ({GLM}) and (ii) a random forest ({RF}).
We then form two ensembles: (a) {\em averaging} (arithmetic mean of predicted probabilities) and (b) {\em stacking} (a meta-logistic model trained on the calibration split using the pair of base scores).
Finally, we fit a {\em monotone} post-hoc recalibration map on an independent calibration split and evaluate metrics on a held-out test split.

Figure~\ref{fig:germancredit:reliability} displays monotone-smoothed calibration curves (raw in green, recalibrated in red; the diagonal corresponds to perfect calibration).
While recalibration substantially improves the {GLM} curve, nonparametric monotone recalibration may introduce distortions for some models (step-like behavior for tree-based scores), which can translate into degraded proper losses on the test set.
This finite-sample phenomenon is consistent with Appendix~\ref{app:calib-inference}: population improvements under strict propriety do not automatically carry over to plug-in estimators of $C$.

Table~\ref{tab:germancredit-gains} and Figure~\ref{fig:germancredit:summary} summarize test-set performance under two proper losses (Brier and log-loss), together with the miscalibration proxy LCS.
Proper-loss reliability estimates (for log-loss and Brier) and robustness analyses over repeated splits are reported in Appendix~\ref{app:dgp}.

\begin{table}[t]
\centering
\caption{\textsc{GermanCredit}: test-set performance before/after monotone recalibration.
Gains are defined as $\Delta=\text{raw}-\text{recal}$ (positive is better).}
\label{tab:germancredit-gains}
\footnotesize
\setlength{\tabcolsep}{4pt}
\begin{tabular}{lrrrrrrr}
\toprule
Model & \multicolumn{3}{c}{Brier} & \multicolumn{3}{c}{LogLoss} & LCS \\
\cmidrule(lr){2-4}\cmidrule(lr){5-7}
& raw & recal & $\Delta$ & raw & recal & $\Delta$ & \\
\midrule
GLM      & 0.19 & 0.18 & 0.01  & 0.73 & 0.55 & 0.19  & 0.012 \\
RF       & 0.17 & 0.17 & -0.00 & 0.52 & 0.63 & -0.11 & 0.014 \\
Average  & 0.17 & 0.18 & -0.01 & 0.52 & 0.75 & -0.24 & 0.013 \\
Stacking & 0.17 & 0.18 & -0.00 & 0.52 & 0.63 & -0.11 & 0.006 \\
\bottomrule
\end{tabular}
\end{table}

\begin{figure}[t]
\centering
\includegraphics[width=\linewidth]{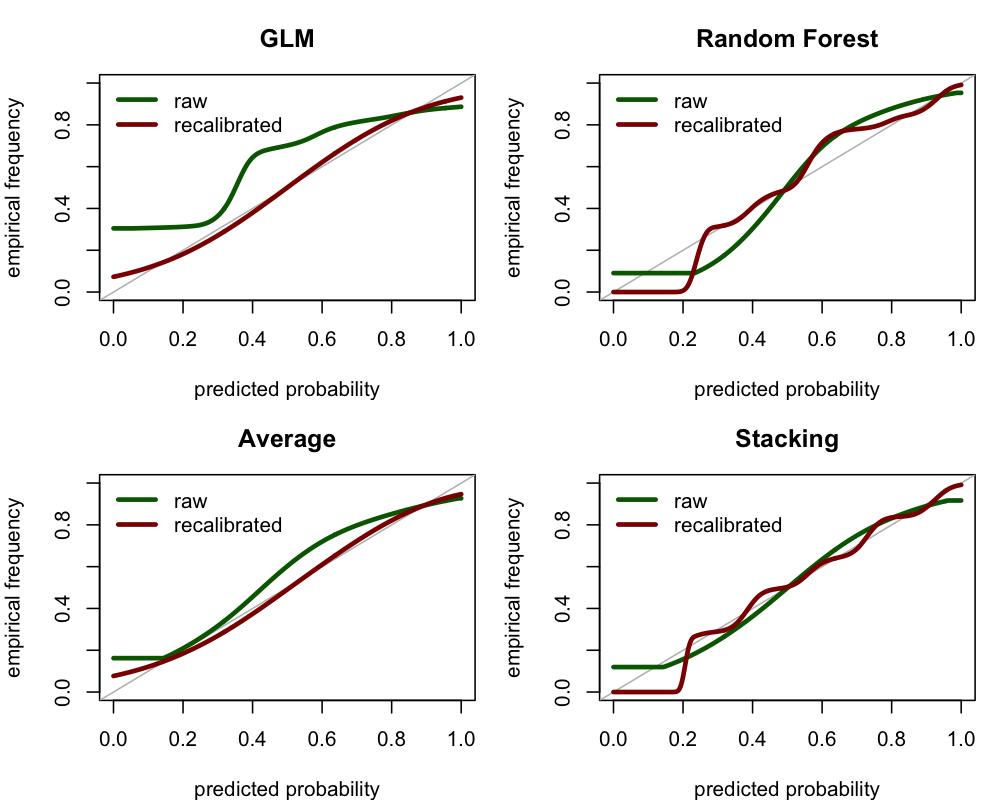}
\caption{\textsc{GermanCredit}: monotone-smoothed calibration curves on the test split.
Green: raw score; red: after monotone post-hoc recalibration fit on a separate calibration split.
The diagonal indicates perfect calibration.}
\label{fig:germancredit:reliability}
\end{figure}

\begin{figure}[t]
\centering
\includegraphics[width=\linewidth]{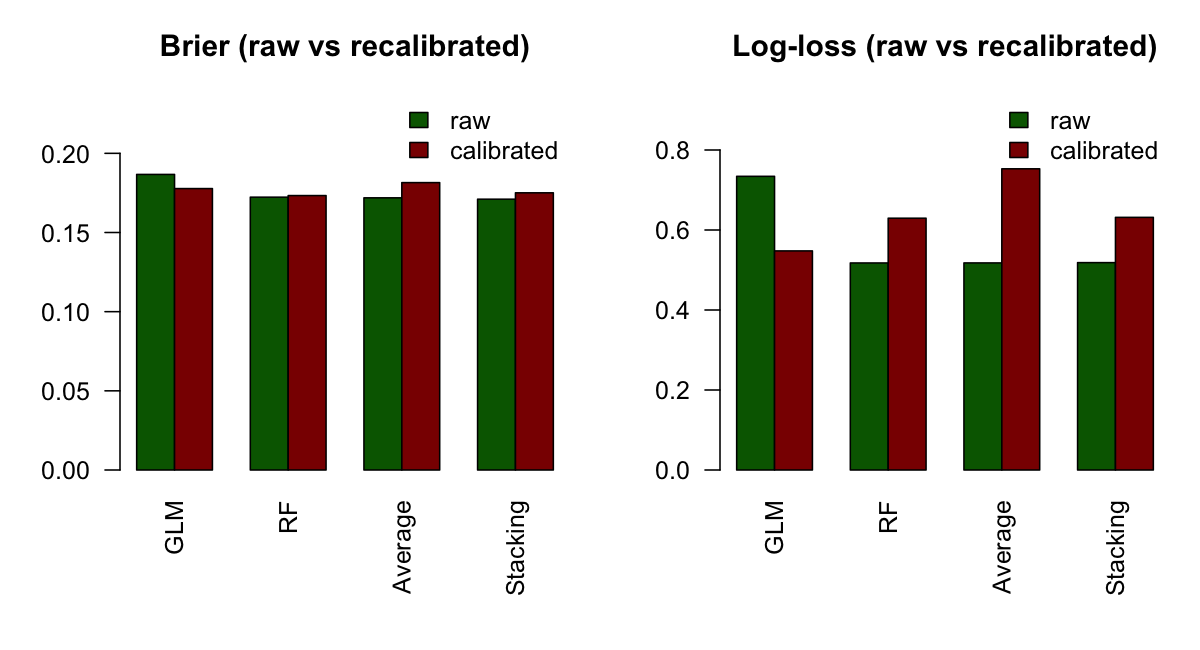}
\caption{\textsc{GermanCredit}: summary metrics on the test split.
Left: miscalibration proxy $\mathrm{LCS}=\widehat{\E}(S-\widehat C(S))^2$.
Middle/right: proper losses (Brier and log-loss) before vs.\ after recalibration.}
\label{fig:germancredit:summary}
\end{figure}


\section{Discussion and conclusion}
\label{sec:discussion}

This paper advocates an {\em information-level} view of calibration under proper losses.
Our main message is that calibration is inherently conditional: a predictor $S=s(\bX)$ is assessed through the $\sigma$-algebra it generates, and any compression $\bX\mapsto S$ induces an information loss that cannot be repaired by post-hoc recalibration.
The proper-loss decompositions developed here make this explicit by separating (i) a reliability term (miscalibration), (ii) a grouping term (loss of information from $\bX$ to $S$), and (iii) residual uncertainty at the feature level.
This separation clarifies what recalibration can and cannot improve, and explains why ensemble procedures may improve overall proper scores while degrading conditional calibration.

\paragraph{Implications for practice.}
Our results support a simple workflow: (1) diagnose performance using a proper-loss decomposition rather than a single scalar score, (2) use post-hoc recalibration to reduce the reliability component on held-out data, and (3) if grouping dominates, improve the score itself (features, model class, or ensembling that retains more information, e.g. stacking) rather than relying on calibration alone.
For sequential methods (boosting), the filtration-based viewpoint suggests interpreting stagewise improvements as refinement gains, while recognizing that reliability may drift and typically benefits from a final recalibration step.

\paragraph{Limitations.}
Our decomposition identities are population statements; empirical estimates require additional modeling choices, in particular when approximating conditional laws such as $C=\mathbb P(Y\in\cdot\mid S)$ (and, beyond simulations, $Q=\mathbb P(Y\in\cdot\mid \bX)$).
Moreover, calibration is only one desideratum: imposing monotonicity or smoothness stabilizes estimation and can preserve ranking, but may also trade off resolution when the score is coarse or heavily discretized.

\paragraph{A geometric perspective: recalibration as transport on $[0,1]$ (and beyond).}
In the binary case, a strictly increasing recalibrator $g:[0,1]\to[0,1]$ can be interpreted as a transport map pushing the distribution of $S$ forward to that of the calibrated score $C=\mathbb P(Y=1\mid S)$.
This suggests a geometric lens on recalibration: compare calibrators by the size of the transport they induce (e.g. Wasserstein distances between the distributions of $S$ and $g(S)$), or view regularized calibration as selecting a map that balances fidelity to empirical calibration with minimal distortion of the score distribution.
Extending this viewpoint to multi-class calibration would naturally involve geometry on the simplex $\Delta(\mathcal Y)$ and raises questions about appropriate transport/metric structures for probability vectors.

\paragraph{Outlook.}
Several directions appear promising.
First, the information-level framework invites {\em calibration-aware ensembling} principles: characterize aggregators that preserve calibration (or preserve it after a controlled projection), and quantify the price paid in grouping when compressing a rich ensemble output to a single score.
Second, developing statistical inference tools for decomposition terms could turn these identities into routine model diagnostics, particularly under distribution shift.
Finally, extending the geometric transport viewpoint to multi-class settings, and relating it to proper-loss optimization and monotone/smooth constraints, may yield new calibration methods with explicit control of both reliability and distortion.

\paragraph{Conclusion.}
Calibration is not an absolute property of a model output but a conditional statement at a chosen information level.
Proper-loss decompositions provide a unified language to separate reliability, information loss, and residual uncertainty, and to reason about recalibration, aggregation, and sequential ensemble procedures in a common framework.

\bibliography{biblio}

\clearpage

\onecolumn

\title{\titredupaper\\(Supplementary Material)}
\maketitle

\appendix

\section{Proofs}

\subsection{Proof of Theorem~\ref{thm:onelevel}}

\begin{proof}
By the tower property of conditional expectation and the definition of $L$,
\[
  \mathbb E[\ell(T,Y)]
  =\mathbb E\Big[\mathbb E[\ell(T,Y)\mid \mathcal A]\Big]
  =\mathbb E\Big[L(T,Q_{\mathcal A})\Big].
  \]
Adding and subtracting $\mathcal E_\ell(Q_{\mathcal A})$ yields
\begin{align*}
\mathbb E[L(T,Q_{\mathcal A})]
&=\mathbb E\big[L(T,Q_{\mathcal A})-\mathcal E_\ell(Q_{\mathcal A})\big]+
  \mathbb E[\mathcal E_\ell(Q_{\mathcal A})\notag
            =\mathbb E[d_\ell(T,Q_{\mathcal A})]+
              \mathbb E[\mathcal E_\ell(Q_{\mathcal A})]. 
            \end{align*}
            \end{proof}
            
\subsection{Proof of Theorem~\ref{thm:chain}}

\begin{proof}
            First apply Theorem~\ref{thm:onelevel} with $\mathcal A$:
              \[
                \mathbb E[\ell(T,Y)] = \mathbb E[d_\ell(T,Q_{\mathcal A})]+\mathbb E[\mathcal E_\ell(Q_{\mathcal A})].
                \]
            It remains to decompose $\mathbb E[\mathcal E_\ell(Q_{\mathcal A})]$ through $\mathcal B$.
            Note that $Q_{\mathcal A}$ is $\mathcal A$-measurable and hence also $\mathcal B$-measurable since $\mathcal A\subseteq\mathcal B$.
            Apply Theorem~\ref{thm:onelevel} with $T:=Q_{\mathcal A}$ and $\mathcal B$:
              \[
                \mathbb E[\ell(Q_{\mathcal A},Y)]
                =
                  \mathbb E\big[d_\ell(Q_{\mathcal A},Q_{\mathcal B})\big]+\mathbb E\big[\mathcal E_\ell(Q_{\mathcal B})\big].
                \]
            Moreover,
            \[
              \mathbb E[\ell(Q_{\mathcal A},Y)]
              =
                \mathbb E\big[\mathbb E[\ell(Q_{\mathcal A},Y)\mid \mathcal A]\big]
              =
                \mathbb E\big[\mathcal E_\ell(Q_{\mathcal A})\big],
              \]
            by the definition of $\mathcal E_\ell$.
            Combining these identities yields the claimed equality.
\end{proof}
           
\subsection{Proof of Lemma~\ref{rem:grouping-zero}}

\begin{proof}
Since $d_\ell(\cdot,\cdot)\ge 0$, the condition $\mathbb E[d_\ell(C,Q)]=0$ implies $d_\ell(C,Q)=0$ a.s.
By strict propriety, $d_\ell(C,Q)=0$ a.s.\ iff $C=Q$ a.s.
Finally, $C=\mathbb P(Y\in\cdot\mid S)$ and $Q=\mathbb P(Y\in\cdot\mid \bX)$ satisfy $C=Q$ a.s.\ iff $Q$ admits a $\sigma(S)$-measurable version, which is exactly $\sigma(S)$-measurability of $Q$.
\end{proof}

\subsection{Proof of Proposition~\ref{prop:urc}}

\begin{proof}[Proof]
Start from the one-level decomposition with $\mathcal A=\mathcal{H}_S$:
$\mathbb{E}[\ell(S,Y)]=\mathbb{E}[d_\ell(S,C)]+\mathbb{E}[\mathcal{E}_{\ell}(C)]$.
It remains to express $\mathbb{E}[\mathcal{E}_\ell(C)]$ relative to the constant $\bar Q$.
Applying Theorem~\ref{thm:onelevel} with $T=\bar Q$ (constant, hence $\mathcal{H}_S$-measurable) and $\mathcal A=\mathcal{H}_S$ yields $\mathbb{E}[\ell(\bar Q,Y)]=\mathbb{E}[d_\ell(\bar Q,C)]+\mathbb{E}[\mathcal{E}_\ell(C)]$.
But $\mathbb{E}[\ell(\bar Q,Y)]=\mathcal{E}_\ell(\bar Q)$, hence $\mathbb{E}[\mathcal{E}_\ell(C)]=\mathcal{E}_\ell(\bar Q)-\mathbb{E}[d_\ell(\bar Q,C)]$ and the claimed formula follows.
A complete treatment (including ``sufficiency'' attributes) is developed in~\citet{broecker2018reliability}.
\end{proof}

\subsection{Proof of Theorem~\ref{thm:four}}

\begin{proof}
First apply Theorem~\ref{thm:chain} to the chain $\mathcal{H}_S\subseteq\mathcal{H}_{\bX}$ to obtain~\eqref{eq:clf-chain}.
Next apply Theorem~\ref{thm:onelevel} with $T=Q$ and $\mathcal A=\mathcal{H}_{Z}$ to get $\mathbb{E}[\ell(Q,Y)]=\mathbb{E}[d_\ell(Q,\Pi)]+\mathbb{E}[\mathcal{E}_\ell(\Pi)]$.
Moreover, $\mathbb{E}[\ell(Q,Y)]=\mathbb{E}[\mathcal{E}_\ell(Q)]$ by the same argument as in the proof of Theorem~\ref{thm:chain}.
Substituting $\mathbb{E}[\mathcal{E}_\ell(Q)]$ into~\eqref{eq:clf-chain} yields the result.
\end{proof}

\subsection{Proof of Corollary~\ref{cor:brier}}

\begin{proof}
For the Brier loss one checks directly that $L(p,q)=(p-q)^2+q(1-q)$, hence $d_\ell(p,q)=(p-q)^2$ and $\mathcal{E}_\ell(q)=q(1-q)$.
Applying Theorem~\ref{thm:chain} to $\mathcal{H}_S\subseteq\mathcal{H}_{\bX}$ yields the first equality.
The second follows from Theorem~\ref{thm:onelevel} with $T=Q$ and $\mathcal A=\mathcal{H}_{Z}$, noting that
$d_\ell(Q,\Pi)=(Q-\Pi)^2$ and $\mathcal{E}_\ell(\Pi)=\Pi(1-\Pi)$.
\end{proof}

\subsection{Proof of Corollary~\ref{cor:logloss-info}}

\begin{proof}
For log-loss, $L(p,q)=H(q)+\mathrm{KL}(q\|p)$, hence
$d_{\log}(p,q)=\mathrm{KL}(q\|p)$ and $\mathrm{E}_{\log}(q)=H(q)$.
Plugging into Theorem~\ref{thm:chain} yields the first display. For the second, note that
$I(Y;\mathcal B\mid \mathcal A)=\mathbb E[\mathrm{KL}(P(Y\in\cdot\mid \mathcal B)\|
                                                        P(Y\in\cdot\mid \mathcal A))]=\mathbb E[\mathrm{KL}(Q_{\mathcal B}\|Q_{\mathcal A})]$.
Finally, when $S=s(X)$, $P(Y\mid X,S)=P(Y\mid X)$, so
$\mathbb E[\mathrm{KL}(Q\|C)]=I(Y;X\mid S)$ and $H(Y\mid X)=\mathbb E[H(Q)]$.
\end{proof}

\subsection{Proof of Proposition~\ref{prop:calib-regret}}

\begin{proof}
The decomposition is Theorem~\ref{thm:onelevel} with $\mathcal A=\sigma(S)$ and $T=S$.
Strict propriety yields $\mathbb E[d_\ell(S,C)]=0 \Rightarrow S=C$ a.s.
Finally, if $S=C$ a.s., then $\mathbb E[\onehot(Y)\mid S]=S$ a.s., hence $\mathbb E[\onehot(Y)]=\mathbb E[S]$ by the tower property.
\end{proof}

\subsection{Proof of Proposition~\ref{prop:pop-opt-recal}}

\begin{proof}
Apply Theorem~\ref{thm:onelevel} with $\mathcal{A}=\sigma(S)$ and $T=g(S)$:
  \[
    \mathbb{E}[\ell(g(S),Y)]
    =\mathbb{E}\big[d_\ell(g(S),C)\big]+\mathbb{E}\big[\mathcal E_\ell(C)\big].
    \]
The second term does not depend on $g$, while the first is minimized at $0$, achieved by $g(S)=C$ a.s. (and uniquely if $\ell$ is strictly proper).
Finally, $\mathbb{E}[\ell(C,Y)]=\mathbb{E}[\mathcal E_\ell(C)]$ since $d_\ell(C,C)=0$.
\end{proof}

\subsection{Proof of Proposition~\ref{prop:monotone-preserves-roc}}

\begin{proof}
For fixed $\tau$, the set $\{s:g(s)\ge\tau\}$ is an interval $[t,1]$ since $g$ is non-decreasing.
Let $\tau':=\inf\{s:g(s)\ge\tau\}$. Then $g(S)\ge\tau$ iff $S\ge\tau'$ except possibly on flat regions (ties), which only affect ROC through the chosen tie-breaking convention.
\end{proof}

\subsection{Proof of Theorem~\ref{thm:boosting-telescope}}

\begin{proof}
Apply Theorem~\ref{thm:chain} to $(\mathcal F_0,\mathcal F_1)$ and predictor $T$:
  \[
    \mathbb E[\ell(T,Y)]
    =
      \mathbb E[d_\ell(T,Q_0)]
    +\mathbb E[d_\ell(Q_0,Q_1)]
    +\mathbb E[\mathcal E_\ell(Q_1)].
    \]
Then apply the same identity to $\mathbb E[\mathcal E_\ell(Q_1)]=\mathbb E[\ell(Q_1,Y)]$ along $(\mathcal F_1,\mathcal F_2)$, and iterate.
The divergence terms telescope, yielding the claim.
\end{proof}

\subsection{Proof of Corollary~\ref{cor:filtration-logloss-info}}

\begin{proof}
Immediate from Corollary~\ref{cor:logloss-info} applied to each pair
$\mathcal F_t\subseteq \mathcal F_{t+1}$ and the definition of conditional mutual information.
\end{proof}

\subsection{Proof of Corollary~\ref{cor:boosting-brier}}

\begin{proof}
Let $\eta_t=\mathbb E[Y\mid\mathcal F_t]$. Write $Y-\eta_t=(Y-\eta_{t+1})+(\eta_{t+1}-\eta_t)$ and expand the square.
The cross term vanishes since $\mathbb E[Y-\eta_{t+1}\mid\mathcal F_{t+1}]=0$.
\end{proof}



\section{Conditional laws as $\Delta(\mathcal Y)$-valued random variables}
\label{app:condlaw}

Throughout, $\mathcal Y=\{1,\dots,k\}$ is finite and $\Delta(\mathcal Y)$ denotes the probability simplex.
For any $\sigma$-algebra $\mathcal A\subseteq\mathcal F$, the conditional law
\[
Q_{\mathcal A}:=\mathbb P(Y\in\cdot\mid \mathcal A)
\]
can be represented as a $\Delta(\mathcal Y)$-valued random vector.

When $\mathcal Y$ is finite, the conditional law $\mathbb P(Y\in\cdot\mid\mathcal A)$ can be identified with the $\mathcal A$-measurable vector of conditional probabilities $\big(\mathbb P(Y=1\mid\mathcal A),\dots,\mathbb P(Y=k\mid\mathcal A)\big)$.
More generally, one may view $\omega\mapsto Q_{\mathcal A}(\omega)$ as an $\mathcal A$-measurable {\em Markov kernel} from $(\Omega,\mathcal A)$ to $(\mathcal Y,2^{\mathcal Y})$.
All equalities involving $Q_{\mathcal A}$ are understood up to $\mathbb P$-almost sure equality (versions).

\begin{lemma}[Vector representation]
\label{lem:vector-QA}
Let $\onehot(Y)\in\{0,1\}^k$ be the one-hot encoding of $Y$.
Define the $\mathcal A$-measurable random vector
\[
\pi_{\mathcal A}:=\mathbb E[\onehot(Y)\mid \mathcal A]\in\R^k.
\]
Then $\pi_{\mathcal A}\in\Delta(\mathcal Y)$ almost surely and, for each $j\in\{1,\dots,k\}$,
\[
(\pi_{\mathcal A})_j=\mathbb P(Y=j\mid \mathcal A)\quad\text{a.s.}
\]
In particular, identifying a distribution $q\in\Delta(\mathcal Y)$ with its coordinate vector $q=(q_1,\dots,q_k)$, we may view $Q_{\mathcal A}$ as the $\Delta(\mathcal Y)$-valued random variable $Q_{\mathcal A}\equiv \pi_{\mathcal A}$.
\end{lemma}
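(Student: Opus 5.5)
The plan is to work coordinatewise, using only linearity, monotonicity and the tower property of conditional expectation. Since $\mathcal Y$ is finite, $\onehot(Y)$ is a bounded (hence integrable) random vector, so $\pi_{\mathcal A}:=\E[\onehot(Y)\mid\mathcal A]$ is well defined, coordinate by coordinate, as an $\mathcal A$-measurable vector up to a.s.\ equality.

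The first step is the coordinate identification. For each $j$, the $j$-th coordinate of $\onehot(Y)$ is the indicator $\mathbf 1_{\{Y=j\}}$. Hence
\[
(\pi_{\mathcal A})_j=\E[\mathbf 1_{\{Y=j\}}\mid\mathcal A]=\mathbb P(Y=j\mid\mathcal A)\quad\text{a.s.},
\]
which is simply the definition of conditional probability given a $\sigma$-algebra.

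The second step is to show that $\pi_{\mathcal A}$ lies in the simplex almost surely.
\begin{itemize}
\item \emph{Nonnegativity.} Since $0\le \mathbf 1_{\{Y=j\}}\le 1$, monotonicity of conditional expectation gives $0\le(\pi_{\mathcal A})_j\le1$ a.s. for each $j$.
\item \emph{Sum to one.} Since $\sum_j \mathbf 1_{\{Y=j\}}=1$ identically, linearity gives $\sum_j(\pi_{\mathcal A})_j=\E[1\mid\mathcal A]=1$ a.s.
\end{itemize}
Each of these holds outside a null set, and there are only $k$ coordinates, so there are finitely many exceptional null sets. Their union is null, and therefore $\pi_{\mathcal A}\in\Delta(\mathcal Y)$ a.s. If an everywhere-valued version is wanted, one can redefine $\pi_{\mathcal A}$ on that null set to be a fixed point of the simplex. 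Since $\mathcal A$-null sets are what matter, I would take the union over the $\mathcal A$-measurable exceptional sets, so that the modified vector stays $\mathcal A$-measurable.

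The third step is the identification $Q_{\mathcal A}\equiv\pi_{\mathcal A}$. A law on the finite set $\mathcal Y$ is determined by its values on singletons, since $\mathbb P(Y\in B\mid\mathcal A)=\sum_{j\in B}\mathbb P(Y=j\mid\mathcal A)$ a.s. by additivity of conditional expectation, and $2^{\mathcal Y}$ has only finitely many sets $B$. Thus the kernel $\omega\mapsto\sum_{j\in B}(\pi_{\mathcal A}(\omega))_j$ is a regular version of $\mathbb P(Y\in\cdot\mid\mathcal A)$.

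There is no real obstacle here. The only point needing care is the version issue: one must make sure a single null set works simultaneously for all coordinates and all subsets $B$, and finiteness of $\mathcal Y$ is exactly what guarantees this.
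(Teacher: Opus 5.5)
Your proof is correct and follows essentially the same route as the paper: you identify each coordinate as $(\pi_{\mathcal A})_j=\E[\mathbf 1_{\{Y=j\}}\mid\mathcal A]=\mathbb P(Y=j\mid\mathcal A)$, get $0\le(\pi_{\mathcal A})_j\le 1$ from monotonicity, and get $\sum_j(\pi_{\mathcal A})_j=1$ a.s.\ from linearity. Your treatment of the finitely many exceptional null sets and your explicit regular version of $\mathbb P(Y\in\cdot\mid\mathcal A)$ only make precise what the paper leaves implicit (and partly covers in its subsequent uniqueness lemma).
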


\begin{proof}
For each $j$, the coordinate $(\pi_{\mathcal A})_j=\mathbb E[\mathbf 1_{\{Y=j\}}\mid\mathcal A]$ is $\mathcal A$-measurable and lies in $[0,1]$ almost surely.
Moreover, summing over $j$ gives
$$
\sum_{j=1}^k (\pi_{\mathcal A})_j=\mathbb E[\sum_{j=1}^k \mathbf 1_{\{Y=j\}}\mid\mathcal A]=1,~\text{a.s.},
$$
hence $\pi_{\mathcal A}\in\Delta(\mathcal Y)$ a.s.
The identity $(\pi_{\mathcal A})_j=\mathbb P(Y=j\mid\mathcal A)$ is the defining property of conditional probabilities.
\end{proof}

\begin{lemma}[Uniqueness up to a.s.\ equality]
\label{lem:unique-version}
Let $U$ and $V$ be $\Delta(\mathcal Y)$-valued $\mathcal A$-measurable random variables such that for every $B\subseteq\mathcal Y$,
\[
U(B)=\mathbb P(Y\in B\mid \mathcal A)\quad\text{a.s.}
\qquad\text{and}\qquad
V(B)=\mathbb P(Y\in B\mid \mathcal A)\quad\text{a.s.}
\]
Then $U=V$ almost surely.
In particular, $Q_{\mathcal A}$ is well-defined as an $\mathcal A$-measurable $\Delta(\mathcal Y)$-valued random variable up to almost sure equality.
\end{lemma}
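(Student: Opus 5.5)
The plan is to reduce the statement to the finitely many singleton events $B=\{j\}$ and then use the fact that a countable (here finite) union of null sets is null. Since $\mathcal Y=\{1,\dots,k\}$ is finite, a $\Delta(\mathcal Y)$-valued random variable $U$ is determined pointwise by its coordinate vector $(U(\{1\}),\dots,U(\{k\}))$. Equality $U=V$ as elements of $\Delta(\mathcal Y)$ at a given $\omega$ is therefore equivalent to $U(\{j\})(\omega)=V(\{j\})(\omega)$ for every $j$. This is the vector identification already used in Lemma~\ref{lem:vector-QA}.

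First, for each $j$, the hypothesis applied with $B=\{j\}$ gives $U(\{j\})=\mathbb P(Y=j\mid\mathcal A)$ a.s. and $V(\{j\})=\mathbb P(Y=j\mid\mathcal A)$ a.s. The two versions of the conditional probability may differ, but any two versions of $\mathbb E[\mathbf 1_{\{Y=j\}}\mid\mathcal A]$ agree a.s. by the standard uniqueness of conditional expectation. Concretely, if $W,W'$ are both $\mathcal A$-measurable and integrate like $\mathbf 1_{\{Y=j\}}$ on every $A\in\mathcal A$, take $A=\{W>W'\}\in\mathcal A$ to get $\mathbb E[(W-W')\mathbf 1_A]=0$, hence $\mathbb P(A)=0$; do the same with the roles exchanged. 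Consequently the set $N_j:=\{U(\{j\})\neq V(\{j\})\}$ is $\mathcal A$-measurable with $\mathbb P(N_j)=0$.

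Second, set $N:=\bigcup_{j=1}^k N_j$. Then $\mathbb P(N)\le\sum_j\mathbb P(N_j)=0$, and on $N^c$ all coordinates agree, so $U=V$ on $N^c$, i.e.\ $U=V$ a.s. Because every $B\subseteq\mathcal Y$ is a finite disjoint union of singletons, we then also get $U(B)=V(B)$ for all $B$ simultaneously off the single null set $N$. The final claim follows immediately. By Lemma~\ref{lem:vector-QA}, $\pi_{\mathcal A}$ provides at least one such $\mathcal A$-measurable $\Delta(\mathcal Y)$-valued version, and the argument shows that any two versions coincide a.s. Hence $Q_{\mathcal A}$ is well defined up to a.s.\ equality.

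The only delicate point is the quantifier order: the hypothesis gives a null set depending on $B$, and we need one null set working for all $B$ at once. Finiteness of $\mathcal Y$ resolves this, since there are only $2^k$ sets $B$, or even just $k$ singletons. I would state explicitly that this step would require a countable generating $\pi$-system argument if $\mathcal Y$ were infinite, but that it is trivial here.
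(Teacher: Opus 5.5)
Your proof is correct and follows essentially the same route as the paper: reduce to the $k$ singletons, get $U(\{j\})=V(\{j\})$ a.s.\ for each $j$, and take the finite union of null sets. The paper leaves implicit two things you spell out, namely that any two versions of $\mathbb E[\mathbf 1_{\{Y=j\}}\mid\mathcal A]$ agree a.s., and that Lemma~\ref{lem:vector-QA} supplies existence for the ``in particular'' claim.
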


\begin{proof}
Because $\mathcal Y$ is finite, it suffices to check equality on singletons. For each $j\in\{1,\dots,k\}$ we have $U(\{j\})=V(\{j\})$ a.s., hence the coordinate vectors coincide a.s., so $U=V$ a.s.
\end{proof}

Finally, note that for any $\mathcal A\subseteq\mathcal B\subseteq\mathcal F$, the corresponding conditional laws satisfy $Q_{\mathcal A}=\mathbb E[Q_{\mathcal B}\mid \mathcal A]$ coordinate-wise, i.e.\ by applying the tower property to each indicator $\mathbf 1_{\{Y=j\}}$.
\section{Statistical uncertainty for calibration components}
\label{app:calib-inference}

Our decompositions are population identities, but empirical work requires estimating quantities such as $\mathbb E[d_\ell(S,C)]$, $\mathbb E[d_\ell(C,Q)]$, and $\mathbb E[\mathcal E_\ell(Q)]$ on finite samples.
A practical question is therefore: how uncertain are these estimates under sampling variability?

\paragraph{Avoiding optimistic bias.}
Because $C=\mathbb P(Y\in\cdot\mid S)$ is typically estimated from data, evaluating calibration on the same sample used to fit the calibrator can lead to optimistic (downward biased) reliability estimates.
We therefore recommend a train/calibration/test split, or cross-fitting, so that both the predictor and the calibration map are evaluated out-of-sample.

\paragraph{Bootstrap inference (pipeline view).}
A simple and broadly applicable approach is resampling-based inference (bootstrap) applied to the entire pipeline: refit the predictor on resampled training data, refit the calibrator (or calibration curve) on resampled calibration data, and evaluate the decomposition terms on held-out test data.
Repeating this procedure yields empirical confidence intervals for reliability, grouping, and total expected loss.

\paragraph{Two sources of uncertainty.}
Depending on the scientific question, one may distinguish:
(i) {\em calibration-only uncertainty}, where the base predictor is treated as fixed and only the calibrator is refit (resampling the calibration split), and
(ii) {\em end-to-end uncertainty}, where both the predictor and calibrator are refit (resampling train and calibration splits).
The latter is typically wider but better reflects deployment variability.

\paragraph{Parametric calibrators.}
For smooth parametric calibrators (e.g.\ Platt scaling or temperature scaling), asymptotic normality can provide standard errors for low-dimensional summaries (e.g.\ integrated calibration error), via standard M-estimation theory.
In practice, bootstrap procedures remain robust and easy to implement across model classes and calibration estimators.

\section{Additional experiment details}
\label{app:exp-synth}\label{app:dgp}

\subsection{Synthetic experiment details}\label{app:D:synthetic}

We generate binary outcomes $Y\in\{0,1\}$ from a controlled two-feature model where the true conditional probability
$Q(\bX)=\mathbb P(Y=1\mid \bX)$ is known in closed form.
Let $(Z_1,Z_2)$ be jointly Gaussian with correlation $\rho\in(-1,1)$ and define
\[
X_1=\Phi(Z_1),\qquad X_2=\Phi(Z_2),
\]
so that $X_1,X_2\in(0,1)$ have uniform marginals and dependence controlled by $\rho$.
Let $\sigma(u)=\{1+\exp(-u)\}^{-1}$ denote the logistic sigmoid and set
\[
\eta(\bX)
:=2.5\big((X_1+X_2)-1\big)+2\big(\exp((X_1-X_2)^3)-1\big),
\qquad
Q(\bX)=\sigma\big(\eta(\bX)\big).
\]
We then sample
\[
Y\mid \bX \sim \mathrm{Bernoulli}\big(Q(\bX)\big).
\]
We vary $\rho$ to control redundancy between $X_1$ and $X_2$.
Unless stated otherwise, we use independent train/calibration/test splits with $n$ samples each.

\begin{figure}[t]
\centering
\includegraphics[width=0.245\linewidth]{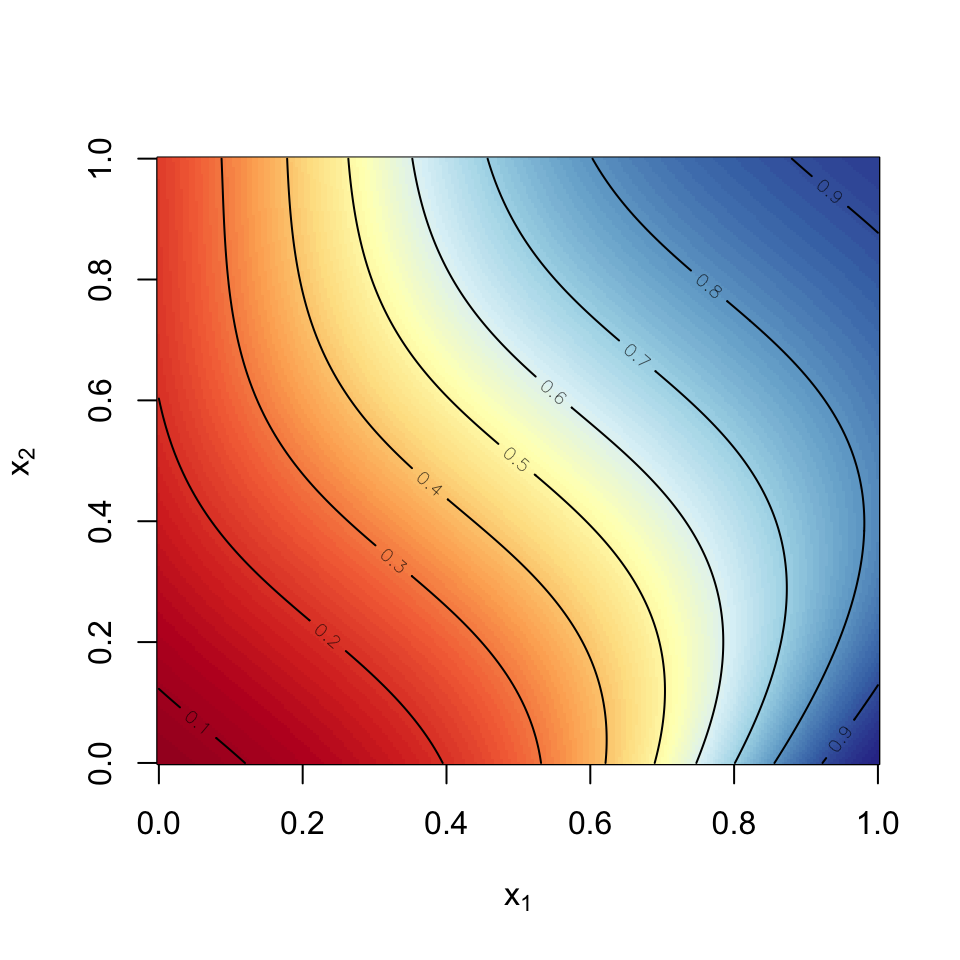}\includegraphics[width=0.245\linewidth]{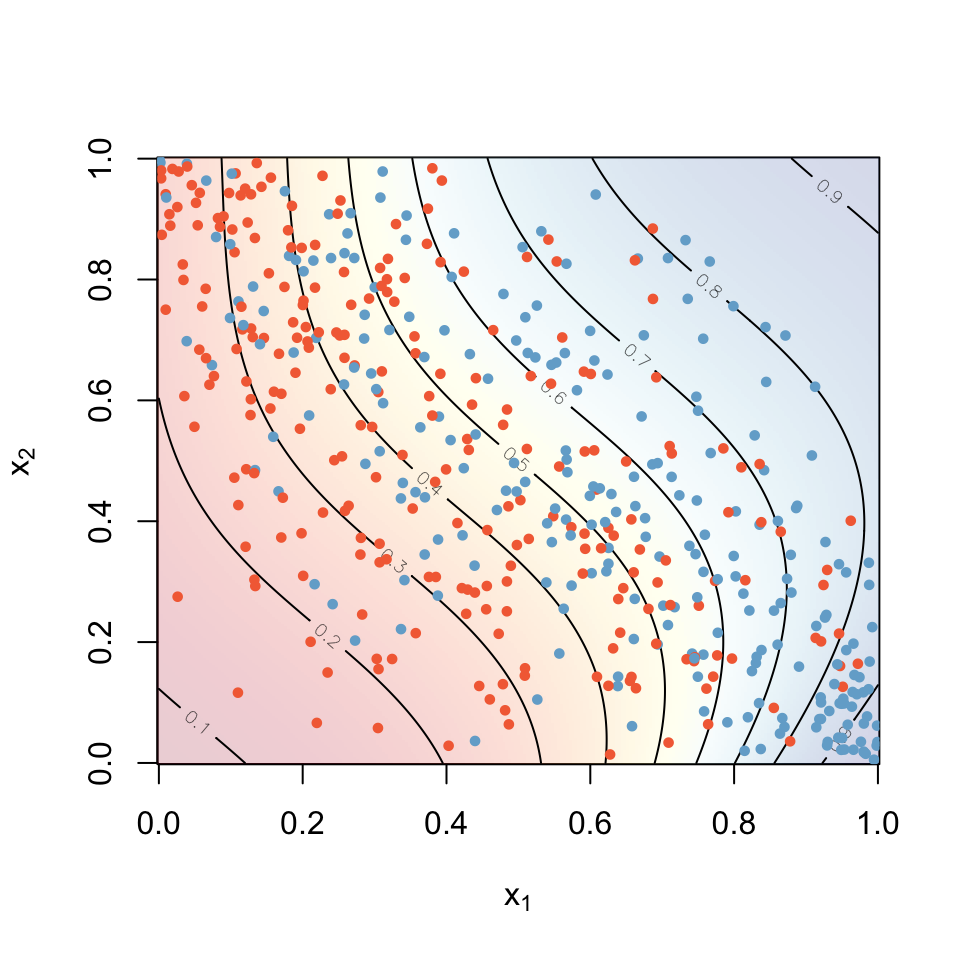}\includegraphics[width=0.245\linewidth]{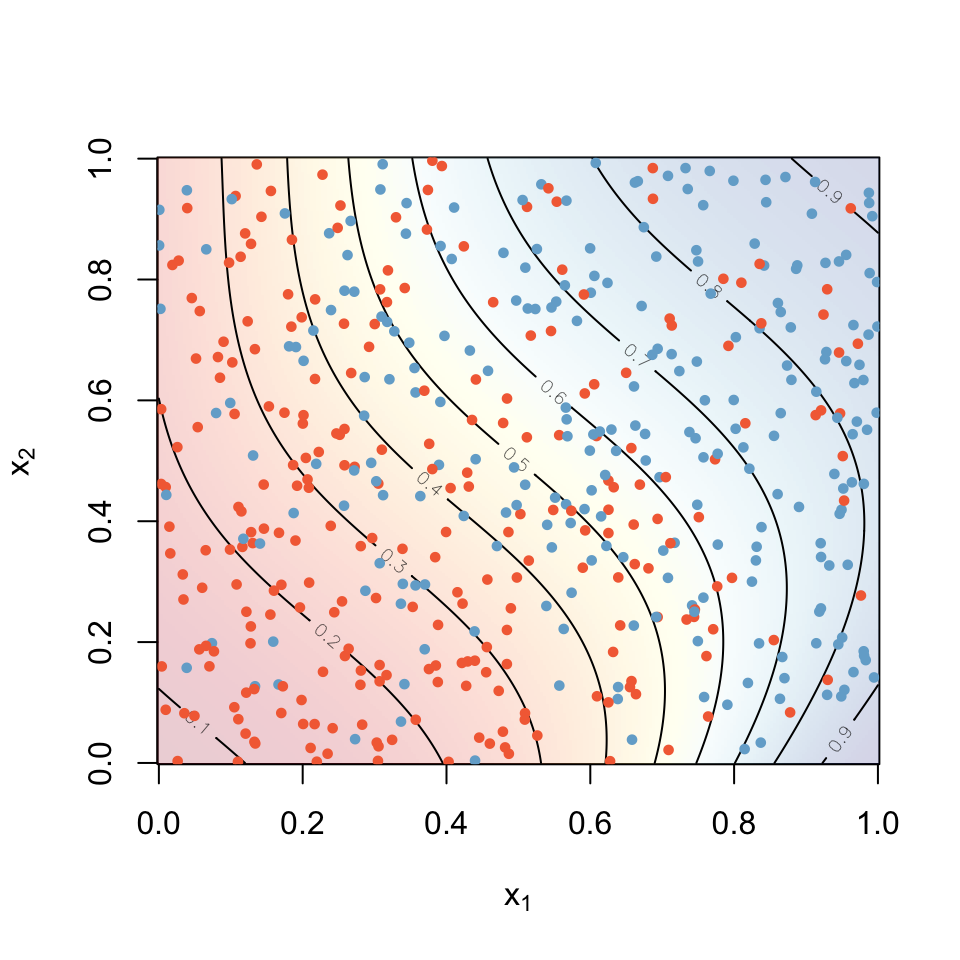}\includegraphics[width=0.245\linewidth]{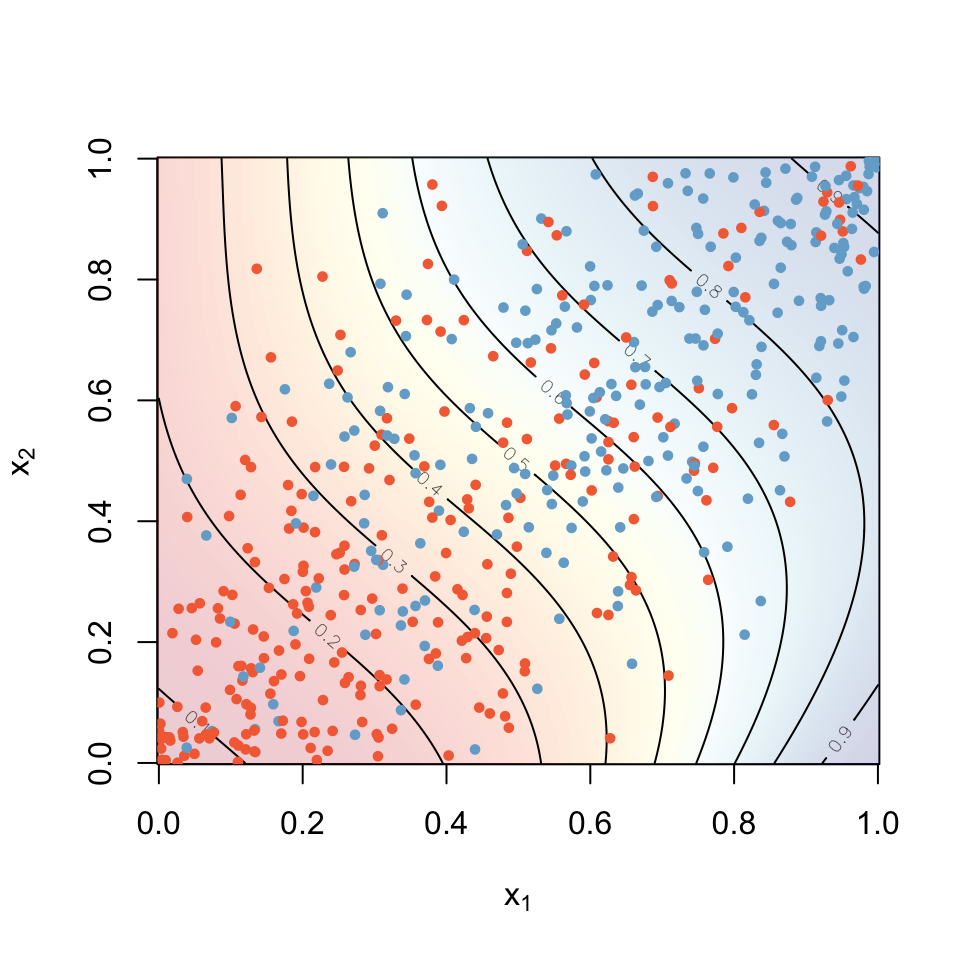}
\caption{True conditional probability surface $Q(x_1,x_2)$ for the synthetic model, on the left hand side. Then 3 scatterplots of simulated data, with $\rho=-0.7$ (middle left), $\rho=0$ (middle right), and $\rho=0.7$ (right).}
\label{fig:true-surface}
\end{figure}

\paragraph{Splits.}
Each run uses independent train/calibration/test splits.
Calibration mappings are fit {\em only} on the calibration split to avoid optimistic bias.

\paragraph{Estimating $C(s)=\mathbb P(Y=1\mid S=s)$.}
In the binary case, we fit monotone smooth calibrators of the form $\widehat C=\widehat g(S)$.
We use (i) Platt scaling, (ii) isotonic regression, and (iii) a monotone spline estimator implemented via constrained GAM/P-splines (SCAM), optionally preceded by kernel pre-smoothing on a grid.
Unless stated otherwise, all reported numbers use isotonic regression as the monotone calibrator; Platt scaling and SCAM yield similar qualitative conclusions. Methodological details are provided in Appendix~\ref{app:g:smooth}. An anonymized implementation will be released upon publication.

\paragraph{Estimating decomposition terms.}
Because $Q(\bX)$ is known in the simulation, we evaluate $H_\ell(Q)$ exactly (plug-in), and compute the grouping term $\mathbb E[d_\ell(C,Q)]$ by plugging in $\widehat C(S)$ and the known $Q(\bX)$ on the test set.
Reliability is computed as $\mathbb E[d_\ell(S,\widehat C(S))]$.
For the Brier score, these become squared differences; for log-loss, they become KL divergences.

\subsection{Additional diagnostics and robustness}\label{app:D:robust}

\subsubsection*{LCS proxy and reliability diagrams}\label{app:D:lcs}
For completeness we report an $\ell_2$-style proxy for calibration error,
\[
\mathrm{LCS} \;=\; \widehat{\E}\big[(S-\widehat C(S))^2\big],
\]
computed by fitting a calibrator $\widehat C$ on a held-out fold (or via cross-fitting) and evaluating on the corresponding test fold.
We emphasize that our theoretical decompositions are in terms of proper-loss reliability
$\widehat{\mathrm{Rel}}_\ell=\frac1n\sum_{i=1}^n d_\ell(S_i,\widehat C(S_i))$; LCS is reported only as a familiar $\ell_2$ proxy.

Reliability diagrams use $B$ quantile bins on $[0,1]$ (approximately equal-mass bins), plotting $\widehat{\E}[Y\mid S\in\text{bin}]$ against the bin-wise mean score.

\subsubsection*{Uncertainty via repeated splits}\label{app:D:splits}
To assess variability, we repeat the full pipeline over $R$ random train/validation/test splits (or seeds).

Table~\ref{tab:robustness} reports mean$\pm$sd over $R=100$ randomized splits for raw/recalibrated losses and proper-loss reliability.
These averages summarize end-to-end variability of the full pipeline (train $\to$ calibrate $\to$ test).
Here, {\em averaging} denotes the arithmetic mean of base probabilities, while {\em stacking} denotes a meta-logistic model trained on the calibration split from the pair of base scores.
We report mean$\pm$std for log-loss, Brier, and proper-loss reliability.

\begin{table}

\caption{\label{tab:robustness}
{\sc GermanCredit} robustness over repeated splits (mean$\pm$sd over $R=100$ seeds). Proper-loss reliability uses the fitted calibrator $\widehat C$ trained on the calibration split and evaluated on the test split.}
\centering
\begin{tabular}[t]{lccc}
\toprule
Method & Log-loss (raw) & Log-loss (recal.) & Rel$_{\log}$\\
\midrule
Average & 0.496 $\pm$ 0.031 & 0.601 $\pm$ 0.093 & 0.028 $\pm$ 0.014\\
GLM & 0.565 $\pm$ 0.084 & 0.640 $\pm$ 0.124 & 0.033 $\pm$ 0.026\\
RF & 0.501 $\pm$ 0.027 & 0.549 $\pm$ 0.071 & 0.040 $\pm$ 0.014\\
Stacking & 0.504 $\pm$ 0.037 & 0.589 $\pm$ 0.085 & 0.021 $\pm$ 0.008\\
\bottomrule
\end{tabular}
\begin{tabular}[t]{lccc}
\toprule
Method & Brier (raw) & Brier (recal.) & Rel$_{\mathrm{Brier}}$\\
\midrule
Average & 0.165 $\pm$ 0.013 & 0.171 $\pm$ 0.012 & 0.007 $\pm$ 0.003\\
GLM & 0.176 $\pm$ 0.016 & 0.177 $\pm$ 0.014 & 0.008 $\pm$ 0.004\\
RF & 0.166 $\pm$ 0.012 & 0.168 $\pm$ 0.015 & 0.012 $\pm$ 0.005\\
Stacking & 0.167 $\pm$ 0.015 & 0.171 $\pm$ 0.015 & 0.004 $\pm$ 0.002\\
\bottomrule
\end{tabular}
\end{table}

Table~\ref{tab:germancredit-tests} reports a paired, split-wise comparison against the {\em Average} ensemble over $R$ randomized train/calibration/test splits.
For each split we compute the recalibrated log-loss and the proper log-loss reliability term $\widehat{\mathrm{Rel}}_{\log}$, and report $\Delta =$ (method $-$ Average), so that negative values indicate an improvement over the reference.
In addition to mean $\pm$ sd of $\Delta$, we report the win-rate (fraction of splits where the method outperforms Average) and a one-sided Wilcoxon signed-rank test of $H_1$: method $<$ Average, with Holm correction across methods.
Overall, the results suggest that stacking yields a statistically significant reduction in $\widehat{\mathrm{Rel}}_{\log}$ (64\% win-rate, corrected $p<10^{-4}$), while differences in recalibrated log-loss are small and not significant after correction.

\begin{table}
\caption{Paired comparison against {\em Average} (computed per split/seed). $\Delta$ denotes (method $-$ reference) so negative values indicate an improvement over the reference. We report win-rate and one-sided Wilcoxon signed-rank tests ($H_1$: method $<$ reference), with Holm correction across methods. \label{tab:germancredit-tests}}
\centering
\begin{tabular}[t]{lcccccc}
\toprule
Method & $\Delta$ Log-loss (recal.) & Win\% & $p$ (Holm) & $\Delta$ Rel$_{\log}$ & Win\%  & $p$ (Holm) \\
\midrule
GLM & 0.0281 $\pm$ 0.1091 & 34\% & 0.996 & 0.0141 $\pm$ 0.0388 & 39\% & 1\\
RF & -0.0184 $\pm$ 0.0963 & 60\% & 0.056 & 0.0134 $\pm$ 0.0140 & 15\% & 1\\
Stacking & 0.0020 $\pm$ 0.0753 & 55\% & 0.949 & -0.0055 $\pm$ 0.0109 & 64\% & $<10^{-4}$\\
\bottomrule
\end{tabular}
\end{table}

\subsubsection*{Protocol and implementation details}\label{app:D:details}
Unless stated otherwise, calibration is estimated out-of-sample, either using an explicit calibration split (as in Section~\ref{sec:experiments-real}) or via K-fold cross-fitting, preventing optimistic bias.
Hyperparameters are selected on validation folds only; test folds are used once per split for reporting.


\section{Shape-constrained smoothing for (re)calibration}\label{app:g:smooth}

We observe pairs $(x_i,y_i)_{i=1}^n$ with $x_i\in[0,1]$ (uncalibrated scores/probabilities) and $y_i\in\{0,1\}$ (or more generally $y_i\in[0,1]$). We seek a recalibration map $g:[0,1]\to[0,1]$, typically required to be {\em nondecreasing}.
For a discussion of interpretability and order-consistency desiderata (including why monotone $g$ preserves ROC/AUC), see Section~\ref{sec:recalibration} and Proposition~\ref{prop:monotone-preserves-roc}.
We focus here on estimation aspects under shape constraints.

\subsection{Isotonic regression (piecewise-constant monotone baseline)}\label{app:isotonic}
Let $x_{(1)}\le \cdots \le x_{(n)}$ denote the sorted inputs, with associated responses $y_{(i)}$.
The (weighted) isotonic regression estimator solves
\begin{equation}
\label{eq:isotonic}
\min_{\theta_1\le \cdots \le \theta_n}
\frac12\sum_{i=1}^n w_i(\theta_i-y_{(i)})^2,
\qquad w_i\ge 0.
\end{equation}
This yields fitted values $\hat\theta_i$ at the design points.
A function $g_{\mathrm{iso}}(x)$ is then obtained by a standard interpolation scheme, typically {\em piecewise constant} (right-continuous steps) over the sorted $x_{(i)}$.
Isotonic regression is the Euclidean projection of $(y_{(i)})$ onto the isotone cone and can be computed efficiently by the pool-adjacent-violators (PAV) algorithm \cite{barlow1972isotonic,robertson1988order,bestchakravarti1990activeset}.

\subsection{Unconstrained smoothing baselines (kernel bandwidth $h$, spline basis size $k$)}\label{app:c2:splines}
This section lists two standard 1D smoothers without monotonicity constraints.

\paragraph{Kernel regression (bandwidth $h$).}
The Nadaraya--Watson estimator is
\begin{equation}
\label{eq:nw}
\hat g_h(x)
=
\frac{\sum_{i=1}^n K\left(\frac{x-x_i}{h}\right)y_i}
{\sum_{i=1}^n K\left(\frac{x-x_i}{h}\right)},
\qquad h>0,
\end{equation}
where $K$ is a kernel (e.g., a compact ``cubic''/triweight kernel $K(u)=\frac{35}{32}(1-u^2)^3\mathbf{1}\{|u|\le 1\}$). The smoothing level is controlled by $h$: smaller $h$ yields less bias and more variance, and vice versa \cite{nadaraya1964,watson1964,wandjones1995}.

\paragraph{Cubic $P$-splines (basis dimension $k$ and smoothing parameter $\lambda$).}
Let $B_1,\dots,B_k$ be a cubic B-spline basis on $[0,1]$ (with chosen knots), and write
$g(x)=\sum_{j=1}^k \beta_j B_j(x)$. A standard penalized least squares problem is, for any $\lambda\ge 0$,
\begin{equation}
\label{eq:pspline}
\min_{\beta\in\mathbb{R}^k}
\frac12\sum_{i=1}^n w_i\Bigl(y_i-\sum_{j=1}^k \beta_j B_j(x_i)\Bigr)^2
+\frac{\lambda}{2}\|D^{(2)}\beta\|_2^2,
\end{equation}
where $D^{(2)}$ is a second-difference matrix on coefficients. Here $k$ controls the maximum flexibility (resolution) and $\lambda$ controls the effective smoothness \cite{eilers1996psplines}.

\subsection{Monotone $C^2$ smoothing (kernel pre-smoothing + monotone cubic splines)}\label{app:C2:monotone}
We describe a practical and stable construction yielding a globally nondecreasing and smooth ($C^2$) calibration map.

\paragraph{Step A (kernel pre-smoothing with bandwidth $h$).}
Fix an increasing grid $t_1<\cdots<t_k$ in $[0,1]$ (e.g., evenly spaced). Define pseudo-observations
\begin{equation}
\label{eq:pseudo}
\tilde y_j = \hat g_h(t_j)
=
\frac{\sum_{i=1}^n K\left(\frac{t_j-x_i}{h}\right)y_i}
{\sum_{i=1}^n K\left(\frac{t_j-x_i}{h}\right)},
\qquad j=1,\dots,k,
\end{equation}
and ``information'' weights (kernel mass)
\begin{equation}
\label{eq:mass}
v_j = \sum_{i=1}^n K\left(\frac{t_j-x_i}{h}\right).
\end{equation}
This step is purely nonparametric and controlled by the bandwidth $h$.

\paragraph{Step B (monotone cubic spline with basis size $k$ and smoothing $\lambda$).}
Let $B_1,\dots,B_k$ be a cubic B-spline basis on $[0,1]$ and write $s(x)=\sum_{\ell=1}^k \beta_\ell B_\ell(x)$ (hence $s\in C^2$).
A common way to enforce global monotonicity is to impose linear inequality constraints on coefficients (or their differences), abstractly written as
\begin{equation}
\label{eq:linmono}
A\beta \succeq 0,
\end{equation}
where $A$ encodes a sufficient condition for $s'(x)\ge 0$ on $[0,1]$ (e.g., using I-spline / integrated M-spline parametrizations or difference constraints in monotone P-spline constructions).
We then solve the convex quadratic program
\begin{align}
&\min_{\beta\in\mathbb{R}^k}~
\frac12\sum_{j=1}^k v_j\Bigl(\tilde y_j-\sum_{\ell=1}^k \beta_\ell B_\ell(t_j)\Bigr)^2
+\frac{\lambda}{2}\|D^{(2)}\beta\|_2^2 \notag \\
&\quad\text{s.t.}\quad A\beta\succeq 0\text{ and }\lambda\ge 0.\label{eq:mono_qp}
\end{align}
The resulting map is $g(x)=s(x)$, optionally clamped to $[0,1]$.

\paragraph{Probability range via a logit link (recommended for calibration).}
To guarantee $g(x)\in(0,1)$, we fit a monotone spline on the logit scale:
\begin{equation}
\label{eq:logit}
\eta(x)=\sum_{\ell=1}^k \beta_\ell B_\ell(x),~~ g(x)=\text{sgmd}(\eta(x))=\frac{1}{1+e^{-\eta(x)}}.
\end{equation}
Since $\text{sgmd}$ is strictly increasing, monotonicity of $\eta$ implies monotonicity of $g$.

\paragraph{Optimization / solver notes.}
\begin{itemize}
\item \textbf{Gaussian/least-squares case:} \eqref{eq:mono_qp} is a convex QP with linear inequalities.
It admits global solutions (no local minima). Uniqueness holds if the quadratic term is strictly convex (e.g., effective design + penalty yields a positive definite Hessian on the feasible set).
Efficient solvers include active-set QP and interior-point methods.
\item \textbf{Binomial/logit case:} one typically minimizes a penalized negative log-likelihood under the same linear constraints on $\beta$, solved by \textbf{constrained IRLS}: each IRLS iteration solves a QP of the form \eqref{eq:mono_qp} with updated working responses and weights. Shape-constrained GAM implementations follow this template \cite{pyawood2015scam}.
\end{itemize}

\paragraph{Simple properties (easy to state/derive).}
\begin{itemize}
\item \textbf{Global monotonicity:} by construction $A\hat\beta\succeq 0$ implies $s'(x)\ge 0$ (or $\eta'(x)\ge 0$), hence $g$ is nondecreasing on $[0,1]$.
\item \textbf{$C^2$ smoothness:} cubic splines are $C^2$ by construction; consequently $g$ is $C^2$ (on the link scale, and $g=\text{sgmd}\circ\eta$ is smooth as well).
\item \textbf{Convexity and global optimality (least squares):} \eqref{eq:mono_qp} is convex with linear constraints; any feasible KKT point is globally optimal.
\item \textbf{Limiting regimes:} increasing $\lambda$ shrinks curvature, approaching an (almost) affine monotone map; increasing $k$ increases representational capacity, while $h$ controls the upstream kernel smoothness.
\end{itemize}

\subsection{Simulations}

In order to illustrate, we generated some datasets $(y_i,x_{1,i},x_{2,i})$, where $x_1$ and $x_2$ are obtained from independent $\mathcal{U}([0,1])$ random variables, where $y$ is the realization of a Bernoulli random variable with mean
$$
\eta(x_1,x_2)=\displaystyle{\frac{\exp[{x_1+x_2}+\psi(x_1,x_2)]}{1+\exp[{x_1+x_2}+\psi(x_1,x_2)]}},
$$
with some non-linear function $\psi$, (here $\psi(x_1,x_2)=\exp((x_1-x_2)^3)-1$). A plain logistic regression is fitted, $\widehat{\pi}$, on the left of Figure~\ref{fig:simulation:logistic}. Then three recalibration mapping are estimated
\begin{itemize}
\setlength{\itemsep}{0.12em}
\setlength{\topsep}{0.12em}
    \item[-] piecewise constant isotonic (Section~\ref{app:isotonic}), dark green,
    \item[-] standard $\mathcal{C}^2$ splines (Section~\ref{app:c2:splines}), dark red,
    \item[-] monotonic $\mathcal{C}^2$ splines  (Section~\ref{app:C2:monotone}), dark blue.
\end{itemize}

\begin{figure}[!htb]
    \centering
    \includegraphics[width=.245\linewidth]{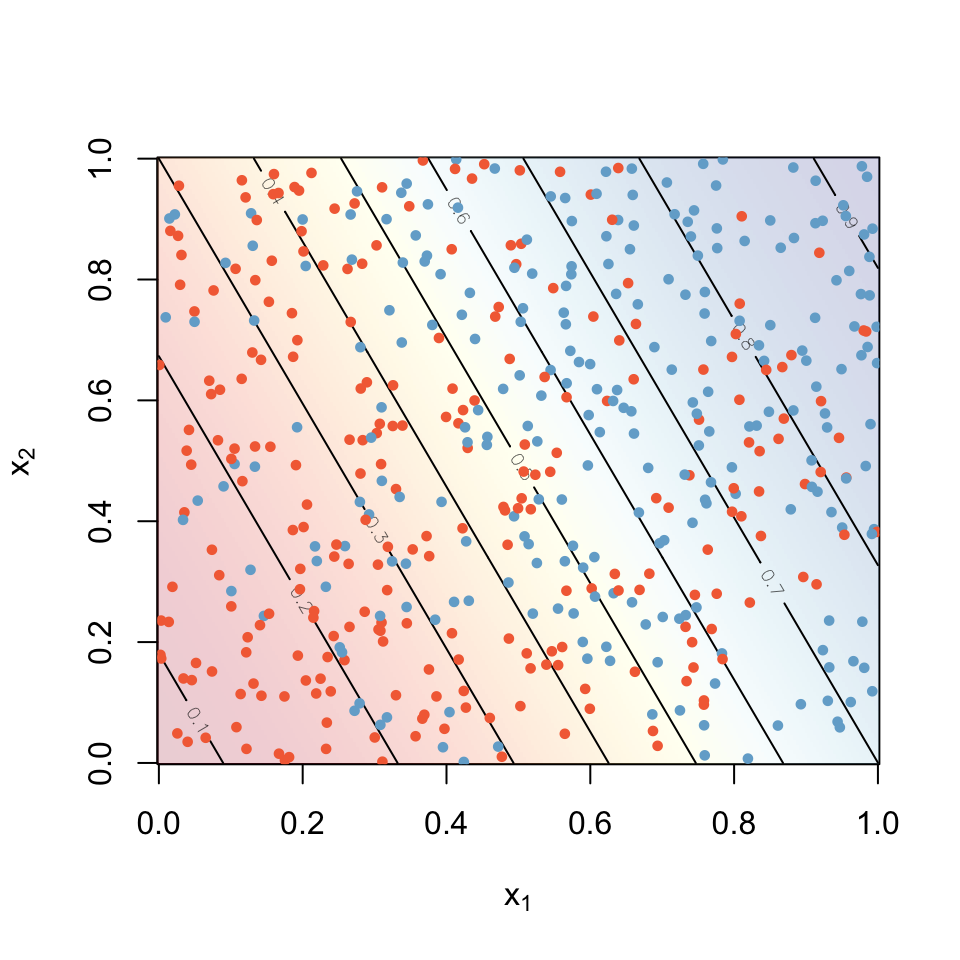}
    \includegraphics[width=.245\linewidth]{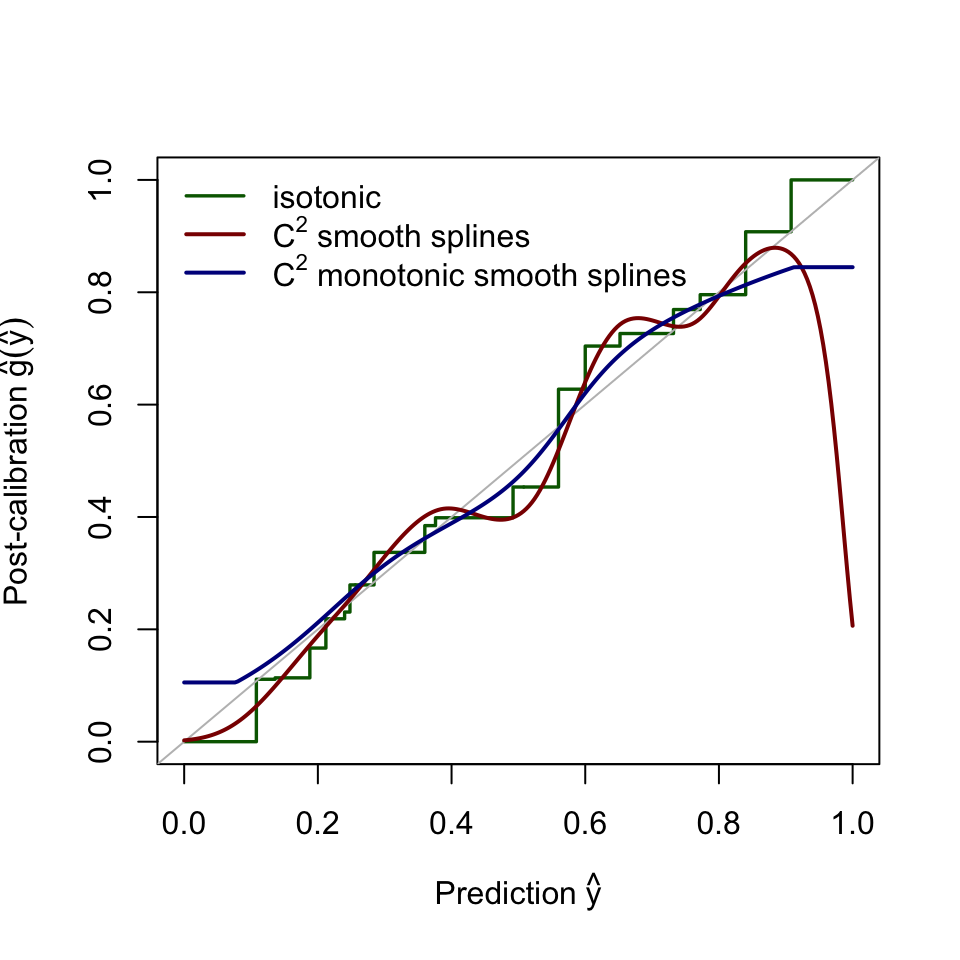}
    \includegraphics[width=.245\linewidth]{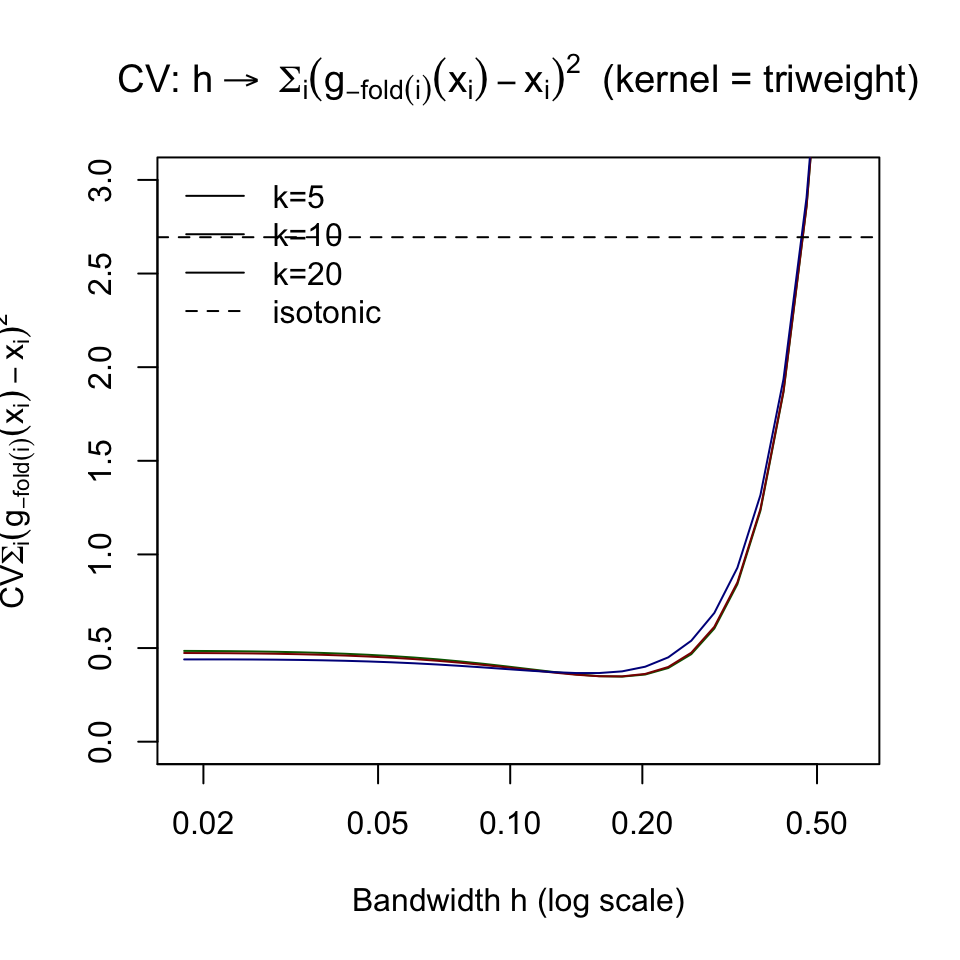}
    \includegraphics[width=.245\linewidth]{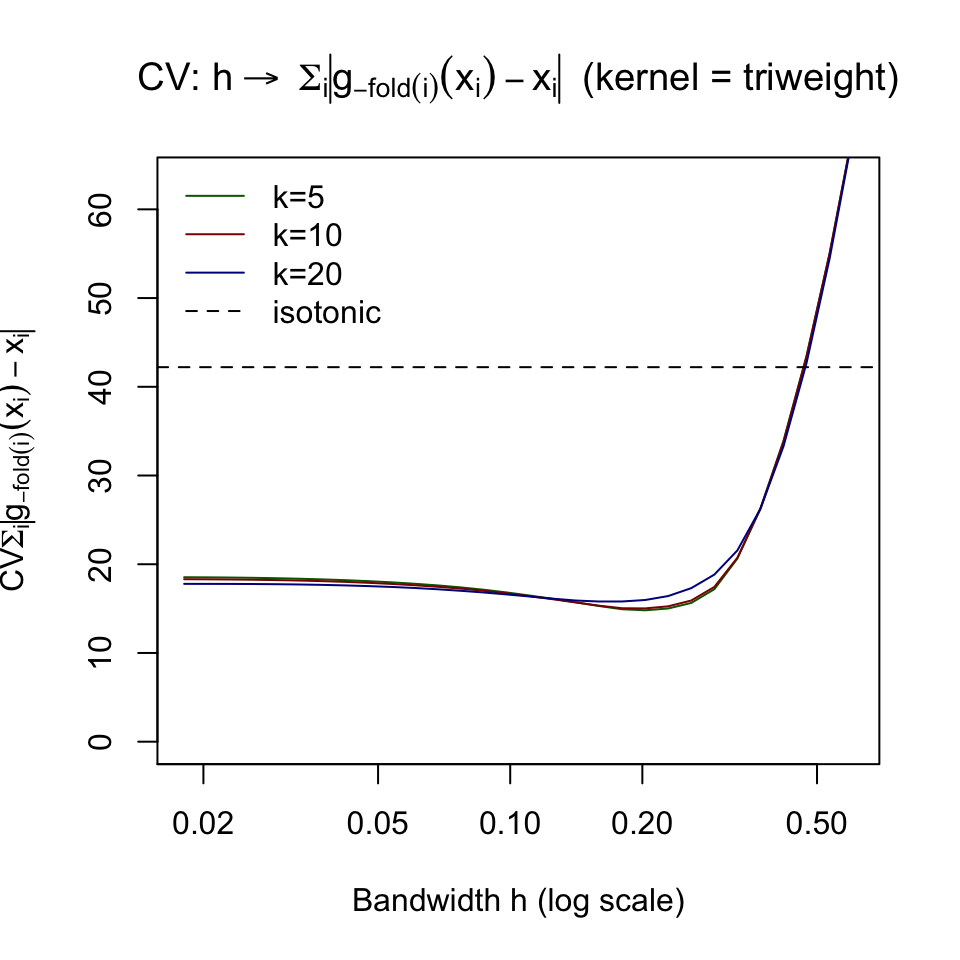}
    \caption{Plain-logistic model fitted on simulated data $\{(x_{1,i},x_{2,i},y_i)\}$, on the left, and two estimations $\hat{g}$, using an isotonic regression (in green), a $\mathcal{C}^2$ smoothing mapping (in red) and a monotone $\mathcal{C}^2$ smoothing mapping (in blue), on the middle left. On the right, evolution of $h\mapsto$ integrated calibration index ($\ell_1$) on the middle right, and local calibration score ($\ell_2$) for $\hat{g}_h$ on the right, as a function of the bandwidth parameter $h>0$, for three values of $k$, and using 5-fold cross validation}
    \label{fig:simulation:logistic}
\end{figure}

To check for calibration, classical calibration metrics are
\begin{align*}
   \text{Brier Score } 
   & = \frac{1}{n}\sum_{i=1}^{n} \big(\hat{s}(\mathbf{x}_i) - y_i\big)^{2} \\
   \text{Integrated Calibration Index } 
   & = \frac{1}{n}\sum_{i=1}^{n} 
   \big|\hat{s}(\mathbf{x}_i) - \hat{g}_h\big(\hat{s}(\mathbf{x}_i)\big)\big| \\
   \text{Local Calibration Score } 
   & = \frac{1}{n}\sum_{i=1}^{n} 
   \big[\hat{s}(\mathbf{x}_i) - \hat{g}_h\big(\hat{s}(\mathbf{x}_i)\big)\big]^2 
\end{align*}
see \cite{brier_1950,Austin2019TheIC,pmlr-v119-zhang20k,brier-use-2021}. 


\end{document}